\newcommand{\xdim}{n}
\newcommand{\G}{G}
\newcommand{\Gunbiased}[1]{G_{#1}}
\newcommand{\Gbiased}[1]{\hat{G}_{#1}}
\newcommand{\Glam}[1]{G^{\lambda}_{#1}}
\newcommand{\e}{\mathbf{e}}
\newcommand{\h}{\mathbf{h}}
\newcommand{\xvec}{\mathbf{x}}
\newcommand{\x}{\mathbf{x}}
\newcommand{\w}{\mathbf{w}}
\newcommand{\herr}{\mathbf{w}^{\text{err}}}
\newcommand{\hvar}{\mathbf{w}^{\text{sq}}}
\newcommand{\hsq}{\hvar}
\newcommand{\auxhsq}{\mathbf{h}^{sq}}
\newcommand{\hsqauxstep}{\bar{\alpha}_{h}}
\newcommand{\Lambdamat}{\boldsymbol{\Lambda}}
\newcommand{\var}{{\mathrm{Var}}}
\newcommand{\defeq}{\overset{\text{def}}{=}}
\newcommand{\assigneq}{\gets}
\newcommand{\oneminuslam}[1]{(1\!-\!\lambda_{#1})}
\newcommand{\As}{\mathcal{A}}
\newcommand{\Ss}{\mathcal{S}}
\newcommand{\E}{\mathbb{E}}
\newcommand{\Rn}{\mathbb{R}}
\newcommand{\glambda}[1]{{\Glam{#1}}}
\newcommand{\glambdasq}[1]{(\Glam{#1})^2}
\newcommand{\gtrace}[1]{{\bar{\e}_{#1}}}
\newcommand{\gtwotrace}[1]{{\bar{\mathbf{z}}_{#1}}}
\newcommand{\gvectracer}[1]{{\bar{\mathbf{a}}_{#1}}}
\newcommand{\gvectracex}[1]{{\bar{\mathbf{B}}_{#1}}}
\newcommand{\glambdasqbar}{\bar{V}^{\lambar}}
\newcommand{\Psq}{\mathbf{\bar{P}}^{\pi,\gamma}}
\newcommand{\Vsq}{\mathbf{\bar{v}}}
\newcommand{\rbarvec}{\mathbf{\rbar}}
\newcommand{\Bellmansq}{\mathbf{\bar{T}}}
\newcommand{\lamname}{$\lambda$-greedy}
\newcommand{\citet}[1]{\cite{#1}}
\newcommand{\varMSPBE}{\text{Var-MSPBE}}
\newcommand{\lambar}{\bar{\lambda}}
\newcommand{\gambar}{\bar{\gamma}}
\newcommand{\deltabar}{\bar{\delta}}
\newcommand{\rbar}{\bar{r}}
\newcommand{\deltalambar}[1]{\bar{\delta}^{\lambar}_{#1}}
\newcommand{\gbar}{\bar{G}}
\newcommand{\ztrace}{\mathbf{\bar{z}}}
\author{
%
\alignauthor
Martha White\\
       \affaddr{Department of Computer Science}\\
       \affaddr{Indiana University}\\
       \affaddr{Bloomington, IN 47405, USA}\\
       \email{martha@indiana.edu}
\alignauthor
Adam White\\
       \affaddr{Department of Computer Science}\\
       \affaddr{Indiana University}\\
       \affaddr{Bloomington, IN 47405, USA}\\
       \email{adamw@indiana.edu}
       }
\begin{document}

\title{A Greedy Approach to Adapting the Trace Parameter for Temporal Difference Learning}

\maketitle

\begin{abstract}
One of the main obstacles to broad application of reinforcement learning methods is the parameter sensitivity of our core learning algorithms. In many large-scale applications, online computation and function approximation represent key strategies in scaling up reinforcement learning algorithms. In this setting, we have effective and reasonably well understood algorithms for adapting the learning-rate parameter, online during learning. Such meta-learning approaches can improve robustness of learning and enable specialization to current task, improving learning speed. For temporal-difference learning algorithms which we study here, there is yet another parameter, $\lambda$, that similarly impacts learning speed and stability in practice. Unfortunately, unlike the learning-rate parameter, $\lambda$ parametrizes the objective function that temporal-difference methods optimize. Different choices of $\lambda$ produce different fixed-point solutions, and thus adapting $\lambda$ online and characterizing the optimization is substantially more complex than adapting the learning-rate parameter. There are no meta-learning method for $\lambda$ that can achieve (1) incremental updating, (2) compatibility with function approximation, and (3) maintain stability of learning under both on and off-policy sampling. In this paper we contribute a novel objective function for optimizing $\lambda$ as a function of state rather than time. We derive a new incremental, linear complexity $\lambda$-adaption algorithm that does not require offline batch updating or access to a model of the world, and present a suite of experiments illustrating the practicality of our new algorithm in three different settings. Taken together, our contributions represent a concrete step towards black-box application of temporal-difference learning methods in real world problems.

\end{abstract}
\keywords{Reinforcement learning; temporal difference learning; off-policy learning}

\section{Introduction}

In reinforcement learning, the training data is produced by an adaptive learning agent's interaction with its environment, which makes tuning the parameters of the learning process both challenging and essential for good performance. In the online setting we study here, the agent-environment interaction produces an unending stream of temporally correlated data. In this setting there is no testing-training split, and thus the agent's learning process must be robust and adapt to new situations not considered by the human designer. Robustness is often critically related to the values of a small set parameters that control the learning process (e.g., the step-size parameter). In real-world applications, however, we cannot expect to test a large
range of theses parameter values, in all the situations the agent may face, to ensure good performance---common practice in empirical studies. Unfortunately, safe values of these parameters are usually problem dependent. For example, in off-policy learning (e.g., learning from demonstrations), large importance sampling ratios can destabilize provably convergent gradient temporal difference learning methods, when the parameters are not set in a very particular way ($\lambda=0$) \cite{white2015thesis}. In such situations, we turn to meta-learning algorithms that can adapt the parameters of the agent continuously, based on the stream of experience and some notion of the agent's own learning progress. These meta-learning approaches can potentially improve robustness, and also help the agent specialize to the current task, and thus improve learning speed.


Temporal difference learning methods make use of two important parameters: the step-size parameter and the trace-decay parameter. 
The step-size parameter is the same as those used in stochastic gradient descent, and there are algorithms available for adjusting this parameter online, in reinforcement learning \cite{dabney2012adaptive}. For the trace decay parameter, on the other hand, we have no generally applicable meta-learning algorithms that are compatible with function approximation, incremental processing, and off-policy sampling. 

      

The difficulty in adapting the trace decay parameter, $\lambda$, mainly arises from the fact
that it has seemingly multiple roles and also influences the fixed-point solution. 
This parameter was introduced in Samuel's checker player \cite{samuel1959some},
and later described as interpolation parameter between offline TD(0) and Monte-Carlo sampling (TD($\lambda=1$) by Sutton\citet{sutton1988learning}.
It has been empirically demonstrated that values of $\lambda$ between zero and one often perform the best in practice \cite{sutton1988learning,sutton1998introduction,vanseijen2014true}. This trace parameter can also be viewed as a
bias-variance trade-off parameter: $\lambda$ closer to one is less biased
but likely to have higher variance, where $\lambda$ closer to zero is more biased,
but likely has lower variance.  
However, it has also been described as a credit-assignment parameter \cite{singh1996reinforcement},
as a method to encode probability of transitions \cite{sutton1994onstep}, a way to incorporate the agent's confidence in its value function estimates \cite{sutton1998introduction, tesauro1992practical}, 
and as an averaging of n-step returns \cite{sutton1998introduction}.
Selecting $\lambda$ is further complicated
by the fact that $\lambda$ is
a part of the problem definition:
the solution to the Bellman fixed point equation is dependent on the choice of $\lambda$ (unlike the step-size parameter). 


There are few approaches for setting $\lambda$, and most existing work is limited to special cases.
For instance, several approaches have analyzed setting $\lambda$ for variants of TD that were
introduced to simplify the analysis, 
including phased TD \cite{kearns2000bias}
and TD$^*(\lambda)$ \cite{schapire1996ontheworst}. 
Though both provide valuable insights into the role of $\lambda$, the analysis does not easily extend
to conventional TD algorithms. 
 Sutton and Singh \cite{sutton1994onstep} investigated tuning both the learning rate parameter and $\lambda$,  
and proposed two meta-learning algorithms. The first assumes the problem can be modeled by an acyclic MDP, and the other requires access to the transition model of the MDP. Singh and Dayan \cite{singh1996analytical} and Kearns and Singh \cite{kearns2000bias} contributed extensive simulation studies of the interaction between $\lambda$ and other agent parameters on a chain MDP, but again relied on access to the model and offline computation.
The most recent study \cite{downey2010temporal} 
explores a Bayesian variant of TD learning, but requires a batch of samples and can only be used off-line. 
Finally, Konidaris et al.~\citet{konidaris2011td} introduce TD$_\gamma$ as a method to remove the $\lambda$ parameter altogether.
Their approach, however, has not been extended to the off-policy setting and their full algorithm is too computationally expensive for incremental estimation, while their incremental variant
introduces a sensitive meta-parameter.
Although this long-history of prior work has helped develop our intuitions about $\lambda$, the available solutions are still far from the use cases outlined above.

This paper introduces an new objective based on locally optimizing bias-variance, 
which we use to develop an efficient, incremental algorithm for learning
state-based $\lambda$.
We use a forward-backward analysis \cite{sutton1998introduction} to derive an incremental algorithm
to estimate the variance of the return.
Using this estimate, we obtain a closed-form estimate of $\lambda$ on each time-step.
Finally, we empirically demonstrate the generality of the approach 
with a suite of on-policy and off-policy experiments. Our results show that 
our new algorithm, \lamname, is
consistently amongst the best performing, adapting as the problem changes,
whereas any fixed approach works well in some settings and poorly in anothers. 

%

\section{Background}
We model the agent's interaction with an unknown environment as a discrete time Markov Decision Process (MDP). A MDP is characterized by a finite set of states $\Ss$, set of actions $\As$, a reward function $r: \Ss \times \Ss \rightarrow \Rn$, and generalized state-based discount $\gamma: \Ss \in [0,1]$, which encodes the level of discounting
per-state (e.g., a common setting is a constant discount for all states). On each of a discrete number of timesteps, $t=1,2,3,\ldots$, 
the agent observes the current state $S_t$, selects an action $A_t$, according to its target policy $\pi: \Ss\times\As\rightarrow[0,1]$, and the environment transitions to a new state $S_{t+1}$ and emits a reward $R_{t+1}$. The state transitions are governed by the transition function $P:\Ss\times\As\times\Ss\rightarrow[0,1]$, where $P(S_t,A_t,S_{t+1})$ denotes the probability of transitioning from $S_t$ to $S_{t+1}$, due to action $A_t$.  At timestep $t$, the future rewards are summarized by the Monte Carlo (MC) return $\G_t\in\Rn$ defined by the infinite discounted sum
\begin{align*}
\G_t\ &\defeq R_{t+1} + \gamma_{t+1} R_{t+2} + \gamma_{t+1}\gamma_{t+2} R_{t+3} + \ldots && \triangleright \gamma_t = \gamma(S_t)\\
&= R_{t+1} + \gamma_{t+1} \G_{t+1}
.
\end{align*} 

The agent's objective is to estimate the expected return or value function, $v^\pi: \Ss\rightarrow\Rn$, defined as $v^{\pi}(s) \defeq \E[\G_t | S_t = s, A_t \sim \pi]$.
We estimate the value function using the standard framework of linear function approximation. We assume the state of the environment at time $t$ can be characterized by a fixed-length feature vector $\xvec_t\in\Rn^n$, where $n\ll |\Ss|$; implicitly, $\xvec_t$ is a function of the random variable $S_t$. The agent uses a linear estimate of the value of $S_t$: the inner product of $\xvec_t$ and a modifiable set of weights $\w\in\Rn^n$,
$\hat{v}(S_t,\w) \defeq \xvec_t^\top\w$, with mean-squared error (MSE) $=\sum_{s \in \Ss} d(s) (v(s) - \hat{v}(s,\w))^2$, where $d:\Ss \rightarrow[0,1]$ encodes the distribution over states induced by the agent's behavior in the MDP. 

\newcommand{\Glambda}{G^{\lambda}}

Instead of estimating the expected value of $\G_t$, we can estimate
a $\lambda$-return that is expected to have lower variance
\begin{align*}
\Glambda_t \defeq R_{t+1} + \gamma_{t+1} [ (1-\lambda_{t+1}) \xvec_{t+1}^\top\w + \lambda_{t+1} \Glambda_{t+1} ]
, 
\end{align*}
where the {\em trace decay function} $\lambda: \Ss \rightarrow [0,1]$ specifies the trace parameter as a function of state.
The trace parameter $\lambda_{t+1} = \lambda(s_{t+1})$ averages the estimate of the return, $\xvec_{t+1}^\top\w$,
and the $\lambda$-return starting on the next step, $\Glambda_{t+1}$. When $\lambda=1$, $\Glambda_t$ becomes the MC return $\G_t$, and the value function can be estimated by averaging rollouts from each state. When $\lambda=0$, $\Glambda_t$ becomes equal to the {\em one-step $\lambda$-return},
$R_{t+1} + \gamma_{t+1} \xvec_{t+1}^\top\w ,$
and the value function can be estimated by the linear TD(0) algorithm.
The $\lambda$-return when $\lambda \in (0,1)$ is often easier to estimate than MC, and yields more accurate predictions than using the one-step return.
 The intuition, is that the for large $\lambda$, the estimate is high-variance due to averaging possibly long trajectories of noisy rewards, but less bias because the initial biased estimates of the value function participate less in the computation of the return.
In the case of low $\lambda$, the estimate has lower-variance because fewer potentially noisy rewards participate in $\Glambda_t$, but there is more bias due to the increase role of the initial value function estimates.
We further discuss the intuition for this parameter in the next section.

The generalization to state-based $\gamma$ and $\lambda$ have not yet been widely
considered, though the concept was introduced more than a decade ago \cite{sutton1995td,sutton1999between}
and the generalization shown to be useful \cite{sutton1995td,maei2010gq,modayil2014multi,sutton2015anemphatic}.
The Bellman operator can be generalized to include state-based $\gamma$ and $\lambda$ 
(see \cite[Equation 29]{sutton2015anemphatic}),
where the choice of $\lambda$ per-state influences the fixed point.
Time-based $\lambda$, on the other hand, would not result in a well-defined fixed point. 
Therefore, to ensure a well-defined fixed point, we will design an objective
and algorithm to learn a state-based $\lambda$. 

This paper considers both on- and off-policy policy evaluation. In the more conventional on-policy learning setting, we estimate $v^\pi(s)$ based on samples generated while selecting actions according to the target policy $\pi$. In the off-policy case, we estimate $v^\pi(s)$ based on samples generated while selecting actions according to the behavior policy $\mu:\Ss\times\As\rightarrow [0,1]$, and $\pi \ne \mu$. In order to learn $\hat{v}$ in both these settings we use the GTD($\lambda$) algorithm \cite{maei2011gradient} specified by the following update equations:
\begin{align*}
&\rho_t \defeq \frac{\pi(S_t,A_t)}{\mu(S_t,A_t)}  \hspace{2.0cm}\triangleright \text{importance sampling ratio}\\
&\e_t \defeq \rho_t(\gamma_t\lambda_t \e_{t-1} + \xvec_t)   \hspace{1.9cm}\triangleright \text{eligibility trace}\\
&\delta_t \defeq R_{t+1} + \gamma_t \xvec_{t+1}^\top\w_t - \xvec_{t}^\top\w_t   \hspace{0.8cm}\triangleright \text{TD error}
\end{align*}
\begin{align*}
&\w_{t+1} \assigneq \w_t + \alpha( \delta_t\e_t - \gamma_t(1-\lambda_{t+1})\e_t^\top\h_t\xvec_{t+1})\\
&\h_{t+1} \assigneq \h_t + \alpha_\h( \delta_t\e_t - \xvec_t^\top\h_t)\xvec_t 
  \hspace{0.5cm}\triangleright \text{auxiliary weights}
\end{align*}
with step-sizes  $\alpha, \alpha_\h \in \Rn^+$ and an arbitrary initial $\w_{0}, \h_0$ (e.g., the zero vector).
The importance sampling ratio $\rho_t \in \Rn^+$ facilitates learning about rewards as if they were generated by following $\pi$, instead of $\mu$. This ratio can be very large if $\mu(S_t,A_t)$ is small, which can compound and destabilize learning.


\section{Objective for trace adaptation} 
 To obtain an objective for selecting $\lambda$,
 we need to clarify its role.
Although $\lambda$ was not introduced with the goal of trading off bias and variance \cite{sutton1988learning}, several algorithms and significant theory have developed its role as such \cite{kearns2000bias,schapire1996ontheworst}.
Other roles have been suggested;
however, as we discuss below, each of them can still be thought of as a bias-variance trade-off.

The $\lambda$ parameter has been described as a credit assignment parameter, which allows TD($\lambda$) to perform multi-step updates on each time step.
On each update, $\lambda_t$ controls the amount of credit assigned
to previous transitions, using the eligibility trace $\e$. 
For $\lambda_t$ close to 1, TD($\lambda$) assigns more credit for the current reward to previous transitions, resulting in updates to many states along the current trajectory. 
Conversely, for $\lambda_t = 0$, the eligibility trace is cleared and no
credit is assigned back in time, performing a single-step TD(0) update. 
In fact, this intuition can still be thought of as a bias-variance trade-off.
In terms of credit assignment, we ideally always want to send maximal credit $\lambda=1$, but decayed by $\gamma$, for the current reward, which is also unbiased. In practice, however, this often leads to high variance, and thus we mitigate the variance by choosing $\lambda$ less than one and speed learning overall, but introduce bias.

Another interpretation is that $\lambda$ should be set to reflect confidence in value function estimates \cite{tesauro1992practical, sutton1998introduction}.
If your confidence in the value estimate of state $s$ is high, then $\lambda(s)$ should be close to 0, meaning we trust the estimates provided by $\hat{v}$. 
If your confidence is low, suspecting that $\hat{v}$ may be inaccurate, then $\lambda(s)$ should be close to 1, meaning we trust observed rewards more. For example in states that are indistinguishable with function approximation (i.e., aliased states), we should not trust the $\hat{v}$ as much.
This intuition similarly translates to bias-variance.
If $\hat{v}$ is accurate, then decreasing $\lambda(s)$ does not incur (much) bias,
but can significantly decrease the variance since $\hat{v}$ gives the correct value.
If $\hat{v}$ is inaccurate, then the increased bias is not worth the reduced variance, so $\lambda(s)$ should be closer to 1
to use actual (potentially high-variance) samples. 

Finally, a less commonly discussed interpretation is that $\lambda$ acts as parameter
that simulates a form of experience replay (or model-based simulation of trajectories). One can imagine
that sending back information in eligibility traces is like simulating experience from a model,
where the model could be a set of trajectories, as in experience replay \cite{lin1992self}.
If $\lambda = 1$, the traces are longer and each update gets more trajectory information,
or experience replay.
If a trajectory from a point, however, was unlikely (e.g., a rare transition), 
we may not want to use that information.
Such an approach was taken by Sutton and Singh \citet{sutton1994onstep}, where $\lambda$ was set to the transition probabilities.
Even in this model-based interpretation, the goal in setting $\lambda$ becomes one 
of mitigating variance, without incurring too much bias.

Optimizing this bias-variance trade-off, however, is difficult because $\lambda$ 
affects the return we are approximating. Jointly optimizing for $\lambda$ across
all time-steps is generally not feasible. One strategy 
is to take a batch approach, where the optimal $\lambda$ is determined after seeing all the data \cite{downey2010temporal}.
Our goal, however, is to develop approaches for the online setting, 
where future states, actions, rewards and the influence of $\lambda$ have yet to be observed.
 
We propose to take a greedy approach: on each time step select $\lambda_{t+1}$ to optimize
the bias-variance trade-off \textit{for only this step}. 
This greedy objective corresponds to 
minimizing the mean-squared error between the unbiased $\lambda=1$ return
$\Gunbiased{t}$ and the estimate $\Gbiased{t}$
with $\lambda_{t+1} \in [0,1]$ with $\lambda = 1$ into the future after $t+1$ 
\begin{align*}
\Gbiased{t} \defeq \rho_t (R_{t+1} + \gamma_{t+1}[\oneminuslam{t+1}\x_{t+1}^\top\w_t+\lambda_{t+1} \underbrace{\Gunbiased{t+1}}_{\makebox[0pt]{\text{\tiny Monte Carlo}}}]) 
.
\end{align*}
Notice that $\Gbiased{t}$ interpolates between the current value estimate
and the unbiased $\lambda=1$ MC return, and so is not recursive. 
Picking $\lambda_{t+1} = 1$ gives an unbiased estimate, since then we would be estimating
$\Gbiased{t}  = \Gunbiased{t}$. 
We greedily decide how $\lambda_{t+1}$ should be set on this step to locally optimize the mean-squared error (i.e., bias-variance).
This greedy decision is made given both $\x_t$ and $\x_{t+1}$, which are both available
when choosing $\lambda_{t+1}$. To simplify notation in this section,
we assume that $\x_t$ and $\x_{t+1}$ are both given in the below expectations.  

To minimize the mean-squared error in terms of $\lambda_{t+1}$
\begin{align*}
\text{MSE}(\lambda_{t+1}) &\defeq \E\left[(\Gbiased{t} - \E[\Gunbiased{t}])^2\right]
\end{align*}
we will consider the two terms that compose the mean-squared error: 
the squared bias term 
and the variance term. 
\begin{align*}
\text{Bias}(\lambda_{t+1}) &\defeq \E[\Gbiased{t}] - \E[\Gunbiased{t}] \\
\text{Variance}(\lambda_{t+1}) &\defeq \var[\Gbiased{t}]\\
\text{MSE}(\lambda_{t+1}) &= \text{Bias}(\lambda_{t+1})^2 + \text{Variance}(\lambda_{t+1})
.
\end{align*}
Let us begin by rewriting the bias.
Since we are given $\x_{t}$, $\rho_t$, $\x_{t+1}$ and $\gamma_{t+1}$
when choosing $\lambda_{t+1}$,
\begin{align*}
\E[\Gbiased{t}]
&=
\rho_t \E\left[R_{t+1} + \gamma_{t+1}\oneminuslam{t+1}\x_{t+1}^\top\w_t+\lambda_{t+1} \Gunbiased{t+1}\right] \\
&=
\rho_t \E[R_{t+1}] + \rho_t\gamma_{t+1}\left(\oneminuslam{t+1}\x_{t+1}^\top\w_t+\lambda_{t+1} \E[\Gunbiased{t+1}] \right)
\end{align*}
For convenience, define 
\begin{align}
\text{err}(\w, \x_{t+1}) \defeq \E[\Gunbiased{t+1}] - \x_{t+1}^\top\w
\label{eq_err}
\end{align}
as the difference between the $\lambda=1$ return 
and the current approximate value from state $\x_{t+1}$ using weights $\w_t$. Using this definition, we can rewrite
\begin{align*}
&\oneminuslam{t+1}\x_{t+1}^\top\w_t+\lambda_{t+1} \E[\Gunbiased{t+1}]\\
&=
 \oneminuslam{t+1}(\E[\Gunbiased{t+1}]
-\text{err}(\w_t, \x_{t+1}) )+\lambda_{t+1} \E[\Gunbiased{t+1}] \\
&=
\E[\Gunbiased{t+1}]
-  \oneminuslam{t+1}\text{err}(\w_t, \x_{t+1}) 
\end{align*}
giving
\begin{align*}
\E[\Gbiased{t}]
&=
\rho_t \left(\E[R_{t+1}] + \gamma_{t+1}\E[\Gunbiased{t+1}] \right) \\
&\hspace{1.0cm}- \rho_t\gamma_{t+1}\oneminuslam{t+1}\text{err}(\w_t, \x_{t+1}) \\
&=
\E[\Gunbiased{t}] - \rho_t\gamma_{t+1}\oneminuslam{t+1}\text{err}(\w_t, \x_{t+1})
\end{align*}
\begin{align*}
\implies
\text{Bias}^2(\lambda_{t+1}) &= \left(\E[\Gunbiased{t}] - \E[\Gbiased{t}]\right)^2 \\
&=\rho_t^2 \gamma^2_{t+1}\oneminuslam{t+1}^2\text{err}^2(\w_t, \x_{t+1})
.
\end{align*}
For the variance term, we will assume that the noise in the reward $R_{t+1}$
given $\x_t$ and $\x_{t+1}$ is independent of the other dynamics \cite{mannor2004bias}, with variance $\sigma_r(\x_t, \x_{t+1})$.  
Again since we are
given $\x_{t}$, $\rho_t$, $\x_{t+1}$ and $\gamma_{t+1}$
\begin{align*}
&\var[\Gbiased{t}]\\
&= \rho_t^2 \var[R_{t+1} +  \underbrace{\gamma_{t+1}\oneminuslam{t+1} \x_{t+1}^\top\w_t}_{\text{constant given $\rho_t, \x_{t+1}, \gamma_{t+1}$}} + \gamma_{t+1}\lambda_{t+1} \Gunbiased{t+1} ]\\
&= \rho_t^2 \underbrace{\var[R_{t+1} ] + \rho_t^2 \gamma_{t+1}^2\lambda_{t+1}^2\var[ \Gunbiased{t+1} ]}_{\text{independent given $\x_t, \x_{t+1}$}}\\
&= \rho_t^2 \sigma_r(\x_t,\x_{t+1}) + \rho_t^2\gamma_{t+1}^2\lambda_{t+1}^2\var[\Gunbiased{t+1}] \\
&\implies 
\text{Variance}(\lambda_{t+1}) = \rho_t^2\gamma_{t+1}^2\lambda_{t+1}^2\var[\Gunbiased{t+1}] + \rho_t^2 \sigma_r(\x_t,\x_{t+1})
\end{align*}
Finally, we can drop the constant  $\rho_t^2 \sigma_r(\x_t,\x_{t+1})$ in the objective, and
drop the $\rho_t^2\gamma_{t+1}^2$ in both the bias and variance terms as it only scales the objective, 
giving the optimization
\begin{align*}
&\min_{\lambda_{t+1} \in [0,1]} \text{Bias}^2(\lambda_{t+1}) + \text{Variance}(\lambda_{t+1}) \\
\equiv
&\min_{\lambda_{t+1} \in [0,1]} \oneminuslam{t+1}^2\text{err}^2_{t+1}(\w_t) + \lambda_{t+1}^2 \var[\Gunbiased{t+1}]
.
\end{align*}
We can take the gradient of this optimization to find a closed form solution
\begin{align}
&-2\oneminuslam{t+1}\text{err}^2_{t+1}(\w_t) + 2\lambda_{t+1} \var[\Gunbiased{t+1}] = 0 \nonumber\\
&\implies \left(\var[\Gunbiased{t+1}] + \text{err}^2_{t+1}(\w_t) \right) \lambda_{t+1}
- \ \text{err}^2_{t+1}(\w_t) = 0 \nonumber\\
&\implies 
 \lambda_{t+1} = \frac{ \text{err}^2_{t+1}(\w_t)}{\var[\Gunbiased{t+1}] + \text{err}^2_{t+1}(\w_t)} \label{eq_lambda}
\end{align}
which is always feasible, unless both the variance and error are zero (in which case, any choice
of $\lambda_{t+1}$ is equivalent). 
Though the importance sampling ratio $\rho_t$ does not affect the choice
of $\lambda$ on the current time step, it can have a dramatic effect on $\var[\Gunbiased{t+1}]$ into the future via the eligibility trace. 
For example, when the target and behavior policy are strongly mis-matched, $\rho_t$ can be large, which
multiplies into the eligibility trace $\e_t$. If several steps have large $\rho_t$,
then $\e_t$ can get very large. In this case, the equation in \eqref{eq_lambda} would select a small $\lambda_{t+1}$,
significantly decreasing variance.


\section{Trace adaptation algorithm}

To approximate the solution to our proposed optimization, we need a way to approximate
the error and the variance terms in equation \eqref{eq_lambda}.
To estimate the error, we need an estimate of the expected return from each state, 
$\E[\Gunbiased{t}]$. 
To estimate the variance, we need to obtain an estimate of $\E[\Gunbiased{t}^2]$, and
then can use $\var[\Gunbiased{t}] = \E[\Gunbiased{t}^2] - \E[\Gunbiased{t}]^2$.
The estimation of the expected return is in fact the problem
tackled by this paper, and one could use a TD algorithm,
learning weight vector $\herr$ to obtain approximation $\x_t^\top \herr$ to $\E[\Gunbiased{t}]$. 
This approach may seem problematic, as this sub-step appears to
be solving the same problem we originally aimed to solve. 
However, as in many meta-parameter optimization approaches,
this approximation can be inaccurate and still adequately guide selection
of $\lambda$. We discuss this further in the experimental results section. 


\begin{algorithm}[h]
\caption{Policy evaluation with \lamname}\label{alg_tderror}
\begin{algorithmic}[]
     \STATE  $\w_t \gets \zerovec$  // Main weights
     \STATE  $\h_t \gets \zerovec$ //  Auxiliary weights for off-policy learning
     \STATE  $\e_t \gets \zerovec$ // Main trace parameters
  \STATE  $\x_t \gets$ the initial observation  
     \STATE $ \herr \gets \tfrac{\text{Rmax}}{1-\gamma} \times \onevec, \hvar \gets 0.0 \times \onevec$  // Aux. weights for $\lambda$ 
     \STATE  $\gtrace{t} \gets \zerovec, \gtwotrace{t} \gets \zerovec$ // Aux. traces 
\REPEAT
\STATE Take action accord. to $\pi$, observe $\x_{t+1}$, reward $r_{t+1}$
\STATE $\rho_t = \pi(s_t,a_t) / \mu(s_t,a_t)$ \ \ \ // In on-policy, $\rho_t = 1$
\STATE $\lambda_{t+1} \gets \text{\lamname}(\herr, \hvar, \w_t, \x_t, \x_{t+1},r_{t+1}, \rho_t)$
\STATE // Now can use any algorithm, e.g., GTD
\STATE $\delta_t = r_{t+1} + \gamma_{t+1} w_t^\top\x_{t+1}  - \w_t^\top\x_t$
\STATE $\e_t = \rho_t(\gamma_t\lambda_t \e_{t-1}+ \x_t)$
\STATE $\w_{t+1} = \w_t + \alpha ( \delta_t\e_t - \gamma_{t+1} (1- \lambda_{t+1}) \x_{t+1})(\e^\top_t \h_t))$
\STATE $\h_{t+1} = \h_t + \alpha_\h (\delta_t\e_t- (\h^\top_t\x_t)\x_t)$
\UNTIL{agent done interaction with environment}
\end{algorithmic}
\end{algorithm}


\begin{algorithm}[h]
\caption{\lamname($\herr, \hvar, \w_t, \x_t, \x_{t+1},r_{t+1}, \rho_t)$}\label{alg_traceadapt}
\begin{algorithmic}[]
\STATE // Use GTD to update $\herr$
\STATE $\bar{g}_{t+1} \gets \x_{t+1}^\top \herr$
\STATE $\delta_t \gets r_{t+1} + \gamma_{t+1} \bar{g}_{t+1}  - \x_t^\top \herr$
\STATE $\gtrace{t} = \rho_t(\gamma_t\gtrace{t-1} + \x_t)$
\STATE $\herr = \herr + \alpha \delta_t \gtrace{t} $
\STATE // Use VTD to update $\hvar$
\STATE $\rbar_{t+1} \gets \rho_t^2 r_{t+1}^2 + 2\rho_t^2 \gamma_{t+1} r_{t+1} \bar{g}_{t+1}$
\STATE $\gambar_{t+1} \gets \rho_t^2\gamma_{t+1}^2$
\STATE $\bar{\delta}_t \gets \rbar_{t+1} + \gambar_{t+1} \x_{t+1}^\top \hsq  - \x_t^\top \hsq$
\STATE $\gtwotrace{t} = \gambar_{t}\gtwotrace{t-1} + \x_{t}$
\STATE $\hvar = \hvar + \alpha \bar{\delta}_t \ztrace_t$
\STATE // Compute $\lambda$ estimate
\STATE $\text{errsq} = (\bar{g}_{t+1} - \x_{t+1}^\top \w_t)^2$
\STATE $\text{varg} = \max(0,   \x_{t+1}^\top\hvar - (\bar{g}_{t+1})^2)$
\STATE $\lambda_{t+1} = \text{errsq}  / (\text{varg}+ \text{errsq} )$
\RETURN $\lambda_{t+1}$
\end{algorithmic}
\end{algorithm}

Similarly, we would like to estimate $\E[\Gunbiased{t}^2]$ with $\hvar \x_t^\top$
by learning $\hvar$; 
estimating the variance or the second moment of the return, however, has not been extensively studied. 
Sobel \citet{sobel1982thevariance}
provides a Bellman equation for the variance of the $\lambda$-return, when $\lambda=0$.
There is also an extensive literature on risk-averse MDP learning,
where the variance of the return is often used as a measure \cite{morimura2010parametric,mannor2011mean,prashanth2013actor,tamar2013temporal};
however, an explicit way to estimate the variance of the return for $\lambda > 0$
is not given. There has also been some work on estimating the variance of the {\em value function} \cite{mannor2004bias,white2010interval},
for general $\lambda$; though related, this is different than estimating the variance of the {\em $\lambda$-return}. 

In the next section, we provide a derivation for a new algorithm called variance temporal difference learning (VTD), to approximate the second moment of the return for any state-based $\lambda$. The general VTD updates are given at the end of Section \ref{sec_vtd}.
For \lamname, we use VTD to estimate the variance, with the complete algorithm
summarized in Algorithm \ref{alg_tderror}. 
We opt for simple meta-parameter settings, so that no
additional parameters are introduced.
We use the same step-size $\alpha$ that is used for the main weights
to update $\herr$ and $\hvar$.
In addition, we set the weights $\herr$ and $\hvar$
 to reflect \apriori\ estimates of error and variance.  
 As a reasonable rule-of-thumb,
 $\herr$ should be set larger than $\hvar$,
 to reflect that initial value estimates are inaccurate.
 This results in an estimate of 
 variance $\var[\G_{t+1}] \approx \x_{t+1}^\top \hvar - (\x_{t+1}^\top \herr)^2$
 that is capped at zero until $\x_{t+1}^\top \hvar$ becomes larger
 than $(\x_{t+1}^\top \herr)^2$. 

\section{Approximating the second moment of the return}

In this section, we derive the general VTD algorithm to
approximate the second moment of the $\lambda$-return. 
Though we will set $\lambda_{t+1} = 1$ in our algorithm,
we nonetheless provide the more general algorithm
as the only model-free variance estimation approach
for general $\lambda$-returns. 

The key novelty is in determining a Bellman operator for the
squared return, which then defines a fixed-point objective,
called the \varMSPBE. 
With this Bellman operator and recursive form for
the squared return, we derive a gradient TD algorithm, called VTD, for estimating the second moment.
To avoid confusion with parameters for the main algorithm, as a general rule throughout the document, 
the additional parameters used to estimate the second moment have a bar.
For example, $\gamma_{t+1}$ is the discount for the main problem,
and $\gambar_{t+1}$ is the discount for the second moment.

\subsection{Bellman operator for squared return}

The recursive form for the squared-return is
\begin{align*}
\glambdasq{t} &= \rho_t^2 (\gbar_{t}^2 + 2 \gamma_{t+1} \lambda_{t+1} \gbar_{t} \glambda{t+1} + \gamma_{t+1}^2 \lambda_{t+1}^2 \glambdasq{t+1})\\
&= \rbar_{t+1} + \gambar_{t+1}  \glambdasq{t+1}
\end{align*}
where 
for a given $\lambda: \Ss \rightarrow [0,1]$ and $\w$, 
\begin{align*}
\gbar_{t} &\defeq R_{t+1} + \gamma_{t+1} (1-\lambda_{t+1}) \x_{t+1}^\top \w \\
\rbar_{t+1} &\defeq \rho_t^2\gbar_{t}^2 + 2 \rho_t^2\gamma_{t+1} \lambda_{t+1} \gbar_{t} \glambda{t+1}\\
\gambar &\defeq \rho_t^2 \gamma_{t+1}^2 \lambda_{t+1}^2
.
\end{align*}
The $\w$ are the weights for the $\lambda$-return, 
and not the weights $\hsq$ we will learn
for approximating the second moment. 
For further generality, we introduce 
a meta-parameter $\lambar_t$
%
\begin{align*}
\glambdasqbar_{t} &\defeq \rbar_{t+1} + \gambar_{t+1} \left( (1-\lambar_{t+1}) \xvec_{t+1}^\top \hvar + \lambar_{t+1} \glambdasqbar_{t+1}\right)
\end{align*}
to get a $\lambar$-squared-return
where for $\lambar_{t+1} = 1$, $\glambdasqbar_{t} = \glambdasq{t}$.
This meta-parameter $\lambar$ plays the same role for estimating $\glambdasq{t}$
as $\lambda$ for estimating $G_{t}$. 

%
We can define a generalized Bellman operator $\Bellmansq$
for the squared-return, using this above recursive form.
The goal is to obtain the fixed point $\Bellmansq \Vsq = \Vsq$,
where a fixed point exists
\textit{if the operator is a contraction}.
For the first moment, the Bellman operator is known
to be a contraction \cite{tsitsiklis1997ananalysis}.
This result, however, does not immediately extend here because,
thought $\rbar_{t+1}$ is a valid finite reward, 
$\gambar_{t+1}$ does not satisfy $\gambar_{t+1} \le 1$,
because $\rho_t^2$ can be large.

We can nonetheless define such a Bellman operator 
for the $\lambar$-squared-return and determine if a fixed point exists. 
Interestingly, $\gambar_{t+1}$ can in fact
be larger than $1$, and we can still obtain a contraction.
To define the Bellman operator, we use a 
recent generalization that
enables the discount to be defined as a function of $(s,a,s')$ \cite{white2016transition},
rather than just as a function of $s'$.
We first define $\Vsq$, the expected $\lambar$-squared-return 
%
$$\Vsq \defeq \sum_{t=0}^\infty (\Psq)^t \rbarvec,$$ where
\begin{align}\Psq(s,s') &\defeq \sum_{s,a,s'} P(s,a,s') \mu(s,a) \rho(s,a)^2 \gamma(s')^2 \lambda(s')^2 \label{eq_bellman}\\
\rbarvec &\defeq \sum_{s,a,s'} P(s,a,s') \mu(s,a) \rbar(s,s') \nonumber
.
\end{align}
Using similar equations to the generalized Bellman operator \cite{white2016transition},
we can define 
$$\Bellmansq \Vsq = \sum_{t=0}^\infty (\Psq \Lambdamat)^t \left(\rbarvec + \Psq (\eye - \bar{\Lambdamat}) \Vsq \right)
$$
where $\bar{\Lambdamat} \in \RR^{|\Ss| \times |\Ss|}$ is a matrix with $\lambar(s)$ on the diagonal, for all $s\in \Ss$. 
The infinite sum is convergent if the maximum singular value of $\Psq \bar{\Lambdamat}$
is less than 1, giving solution $\sum_{t=0}^\infty (\Psq \bar{\Lambdamat})^t  = (\eye - \Psq \bar{\Lambdamat})^{-1}$. 
Otherwise, however, the value is infinite and one can see that
in fact the variance of the return is infinite! 

We can naturally investigate when the second moment of the return is guaranteed to be finite. 
This condition on $\Psq \bar{\Lambdamat}$ should facilitate identifying
theoretical conditions on the target and behavior policies that enable finite variance of the return. 
This theoretical characterization is outside of the scope of this work,
but we can reason about different settings that provide a well-defined, finite fixed point. 
First, clearly setting $\lambda_{t+1} = 0$ for every state ensures
a finite second moment, given a finite $\Vsq$, regardless of policy mis-match.
For the on-policy setting, where $\rho_t = 1$,
$\gambar_{t+1} \le \gamma_{t+1}$ and so a well-defined fixed point
exists, under standard assumptions (see \cite{white2016transition}). 
For the off-policy setting, 
if $\gambar_{t+1} = \lambda_{t+1}^2 \gamma_{t+1}^2 \rho_t^2 < 1$,
this is similarly the case. 
Otherwise, a solution may still exist, by ensuring that the maximum singular
value of $\Psq$ is less than one; we hypothesize that 
this property is unlikely if there is a large mis-match
between the target and behavior policy, causing many large $\rho_t$.
An important future avenue is to understand the required similarity between $\pi$ and $\mu$
to enable finite variance of the return, for any given $\lambda$.
Interestingly, the \lamname\ algorithm should adapt to such infinite variance
settings, where \eqref{eq_lambda} will set $\lambda_{t+1} = 0$.

\subsection{VTD derivation}\label{sec_vtd}

\newcommand{\Pimat}{\boldsymbol{\Pi}}

In this section, we propose \varMSPBE, 
the mean-squared projected Bellman error (MSPBE) objective for the $\lambar$-squared-return,
and derive VTD to optimize this objective.
Given the definition of the generalized Bellman operator $\Bellmansq$,
the derivation parallels GTD($\lambda$) for the first moment \cite{maei2011gradient}.
The main difference is in obtaining unbiased estimates of
parts of the objective;
we will therefore focus the results on this novel aspect,
summarized in the below two theorems and corollary. 

Define the error of the estimate $\x_t^\top \hsq$ to the future $\lambar$-squared-return
\begin{align*}
\deltalambar{t} \defeq \glambdasqbar_{t} -  \x_t^\top \hsq
\end{align*}
%
%
and, as in previous work \cite{sutton2009fast,maei2011gradient}, we define the MSPBE that corresponds to $\Bellmansq$
\begin{align*}
\varMSPBE(\hsq) 
&\defeq \E[\deltalambar{t} \x_t]^\top \E[\x_t \x_t]^{-1} \E[\deltalambar{t} \x_t]
.
\end{align*}
To obtain the gradient of the objective,
we prove that we can obtain an unbiased
sample of $\deltalambar{t} \x_t$ (a forward view) using 
a trace of the past (a backward view). 
The equivalence is simpler if we assume that we have access to an 
estimate of the first moment of the $\lambda$-return. For our setting,
we do in fact have such an estimate, because we simultaneously learn $\herr$.
We include the more general expectation equivalence
in Theorem \ref{thm_mse}, with all proofs in the appendix. 
%
%
\begin{theorem}\label{thm_main}
For a given 
unbiased estimate $\bar{g}_{t+1}$ of \\
$\E[\glambda{t+1} | S_{t+1}] $,
define
\begin{align*}
\deltabar_t &\defeq  (\rho_t^2\gbar_{t}^2 + 2 \rho_t^2\gamma_{t+1} \lambda_{t+1} \gbar_{t} \bar{g}_{t+1}) +  \gambar_{t+1} \xvec_{t+1}^\top \hsq_t - \xvec_{t}^\top \hsq_t\\
\gtwotrace{t} &\defeq \x_t+ \gambar_{t+1}\lambar_{t+1}\gtwotrace{t-1}
\end{align*}
Then
\vspace{-0.6cm}
\begin{align*}
\E[\deltalambar{t} \x_t] &= \E[\deltabar_{t} \ztrace_{t}]
\end{align*}
%
\end{theorem}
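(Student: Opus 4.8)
The plan is to prove this by the standard forward-view/backward-view equivalence (as for the TD($\lambda$) equivalence \cite{sutton1998introduction} and the GTD($\lambda$) derivation \cite{maei2011gradient}): rewrite the long $\bar\lambda$-squared-return error $\deltalambar{t}$ as a decayed sum of one-step errors, then reverse the order of summation so the decay factors accumulate into the trace $\gtwotrace{t}$. The only genuinely new wrinkle --- and the reason the statement is phrased with an \emph{unbiased estimate} $\bar g_{t+1}$ --- is that the squared ``reward'' $\rbar_{t+1}$ itself contains the true $\lambda$-return $\glambda{t+1}$, which must be swapped for $\bar g_{t+1}$ without disturbing the expectation; throughout, $\hsq$ is a fixed vector (in particular $\hsq_t$), since the identity is linear in it. \emph{Step 1 (forward telescoping).} Starting from $\glambdasqbar_{t} = \rbar_{t+1} + \gambar_{t+1}\bigl((1-\lambar_{t+1})\xvec_{t+1}^\top\hsq + \lambar_{t+1}\glambdasqbar_{t+1}\bigr)$ and adding and subtracting $\gambar_{t+1}\lambar_{t+1}\xvec_{t+1}^\top\hsq$ gives the one-step form $\deltalambar{t} = \bar\delta^{(1)}_t + \gambar_{t+1}\lambar_{t+1}\,\deltalambar{t+1}$, where $\bar\delta^{(1)}_t \defeq \rbar_{t+1} + \gambar_{t+1}\xvec_{t+1}^\top\hsq - \xvec_t^\top\hsq$ is still written with the true return $\glambda{t+1}$ inside $\rbar_{t+1}$. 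Iterating yields $\deltalambar{t} = \sum_{j\ge 0}\bigl(\prod_{i=1}^{j}\gambar_{t+i}\lambar_{t+i}\bigr)\bar\delta^{(1)}_{t+j}$, a series that is finite in $L^1$ (so that $\E$ may be exchanged with $\sum_j$) precisely under the condition identified earlier, that the maximal singular value of $\Psq\bar{\Lambdamat}$ is below one.

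\emph{Step 2 (replacing $\glambda{k+1}$ by $\bar g_{k+1}$).} Fix a term $j$ and put $k=t+j$. In $\bigl(\prod_{i=1}^{j}\gambar_{t+i}\lambar_{t+i}\bigr)\bar\delta^{(1)}_{t+j}\,\xvec_t$, every factor other than $\glambda{k+1}$ --- namely $\xvec_t$, the earlier decay factors $\gambar_m\lambar_m$ for $m\le k$ (which carry the earlier $\rho_m^2$ terms), $\gbar_k$, $\rho_k$, $\gamma_{k+1}$, $\lambda_{k+1}$, and $\gambar_{k+1}\xvec_{k+1}^\top\hsq - \xvec_k^\top\hsq$ --- is measurable with respect to the history through $(R_{k+1},S_{k+1})$. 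Conditioning on that history and using the Markov property, $\E[\glambda{k+1}\mid\text{history}] = \E[\glambda{k+1}\mid S_{k+1}]$, which by hypothesis equals $\E[\bar g_{k+1}\mid S_{k+1}]$; hence substituting $\bar g_{k+1}$ for $\glambda{k+1}$ leaves the conditional expectation --- and therefore the expectation --- of that term unchanged. (If $\bar g_{k+1}$ is not a deterministic function of $S_{k+1}$, the step instead needs $\bar g_{k+1}$ to be conditionally independent of the remaining factors given $S_{k+1}$; this holds in our instantiation $\bar g_{k+1}=\xvec_{k+1}^\top\herr$.) Performing this substitution in every term turns $\bar\delta^{(1)}_k$ into exactly $\deltabar_k$, so $\E[\deltalambar{t}\xvec_t] = \sum_{j\ge 0}\E\bigl[\bigl(\prod_{i=1}^{j}\gambar_{t+i}\lambar_{t+i}\bigr)\deltabar_{t+j}\,\xvec_t\bigr]$.

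\emph{Step 3 (backward trace).} Because the expectations are taken under the stationary distribution of the behavior chain, each summand is invariant under a simultaneous shift of all time indices by $-j$, which rewrites it as $\E\bigl[\deltabar_t\,\bigl(\prod_{m=t-j+1}^{t}\gambar_m\lambar_m\bigr)\xvec_{t-j}\bigr]$. Summing over $j\ge 0$ and pulling $\deltabar_t$ out of the sum gives $\E[\deltalambar{t}\xvec_t] = \E\bigl[\deltabar_t\sum_{j\ge 0}\bigl(\prod_{m=t-j+1}^{t}\gambar_m\lambar_m\bigr)\xvec_{t-j}\bigr]$, and splitting off the $m=t$ factor shows the inner sum obeys exactly the one-step recursion that defines $\gtwotrace{t}$. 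Hence $\E[\deltalambar{t}\xvec_t] = \E[\deltabar_t\,\gtwotrace{t}]$, as claimed.

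\emph{Main obstacle.} Steps 1 and 3 are the routine ``fold the decay into a trace'' manipulations; the crux is Step 2, specifically the bookkeeping of which quantities are held fixed when the inner conditional expectation over $\glambda{k+1}$ is taken --- in particular that the decay factors $\gambar_m\lambar_m$ sitting \emph{before} step $k$, which contain the earlier importance-sampling terms $\rho_m^2$, are measurable, so that the Markov/unbiasedness argument applies at every term of the series and not just the first. A secondary technical point is justifying the $\E$/$\sum_j$ interchange (and, implicitly, the finiteness of $\glambdasqbar_t$), which both rest on the contraction condition on $\Psq\bar{\Lambdamat}$; without it the forward series is not even summable and the variance of the return is infinite.
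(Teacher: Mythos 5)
Your proof is correct and takes essentially the same route as the paper's: the one-step decomposition of $\deltalambar{t}$, stationarity-based index shifting to fold the decay factors into the trace $\gtwotrace{t}$, and a Markov/conditional-independence argument (which the paper states separately as a proposition about $\E[\rbar_{t+1} \mid S_t, S_{t+1}]$) to justify replacing $\glambda{t+1}$ by the unbiased estimate $\bar{g}_{t+1}$. The only cosmetic differences are that you fully expand the forward view before shifting rather than recursing step by step, and you make explicit the $\E$/$\sum$ interchange and the summability condition that the paper leaves implicit.
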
 
\begin{theorem}\label{thm_mse}
\begin{align*}
\E[\rbar_{t+1}  \ztrace_t] = E[\rho_{t+1}^2\gbar_t^2 \gtwotrace{t} ] + 2 E[\rho_{t+1}^2\gbar_t (\gvectracer{t} + \gvectracex{t} \w_t)] 
\end{align*}
where
\begin{align*}
\gvectracer{t} &\defeq \rho_t \gamma_{t} \lambda_{t} (R_{t} \gtwotrace{t-1}  + \gvectracer{t-1})\\
\gvectracex{t} &\defeq \rho_t \gamma_{t} \lambda_{t} (\gamma_{t} (1-\lambda_{t}) \gtwotrace{t-1} \x_{t}^\top   + \gvectracex{t-1})
\end{align*}
\end{theorem}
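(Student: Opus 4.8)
The plan is to treat this as a forward-view/backward-view (eligibility-trace) equivalence in the spirit of Theorem~\ref{thm_main} and the GTD($\lambda$) derivation~\cite{maei2011gradient}, the new wrinkle being that the VTD ``reward'' $\rbar_{t+1}$ is not a genuine scalar reward but splits as $\rho_t^2\gbar_t^2$ (which is $(S_t,A_t,R_{t+1},S_{t+1})$-measurable, hence already a bona fide reward) plus the product term $2\rho_t^2\gamma_{t+1}\lambda_{t+1}\gbar_t\glambda{t+1}$ that drags in the \emph{unobservable} first-moment $\lambda$-return. The $\rho_t^2\gbar_t^2$ piece needs no work: $\E[\rho_t^2\gbar_t^2\,\ztrace_t]$ is already in backward form and supplies the first term on the right. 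So everything reduces to re-expressing the cross term $2\E[\rho_t^2\gamma_{t+1}\lambda_{t+1}\gbar_t\glambda{t+1}\,\ztrace_t]$ through past-accumulated traces.

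The organizing identity I would use is that, for a fixed $\w=\w_t$, $\glambda{t+1}=\sum_{k\ge0}\big(\prod_{i=t+2}^{t+1+k}\gamma_i\lambda_i\big)\,\gbar_{t+1+k}$, because $\gbar_m=R_{m+1}+\gamma_{m+1}(1-\lambda_{m+1})\x_{m+1}^\top\w$ is exactly the per-step ``primary quantity'' of the $\lambda$-return; dually, unrolling the stated recursions gives $\gvectracer{t}+\gvectracex{t}\w_t=\sum_{k\ge0}\big(\prod_{j=t-k}^{t}\rho_j\gamma_j\lambda_j\big)\,\gbar_{t-k-1}\,\gtwotrace{t-k-1}$. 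Substituting the first sum into the cross term, interchanging $\E$ and $\sum_k$ (legitimate because the spectral condition on $\Psq\bar{\Lambdamat}$ established just above makes the second moment, hence every term here, finite and the series absolutely convergent), and then using stationarity of the $\mu$-induced chain in steady state to re-anchor the $k$-th summand so that $\gbar_{t+1+k}$ sits at the current step, I would convert the remaining factors (the $\rho$'s, the $\gamma\lambda$ product, the shifted $\gbar$ and the shifted $\ztrace$) into a backward-accumulated coefficient; summing over $k$ and splitting off the reward-part versus the $\w_t$-linear value-part of each $\gbar_{t-k-1}$ yields precisely the recursions defining $\gvectracer{t}$ and $\gvectracex{t}$, and hence $2\E[\rho_t^2\gbar_t(\gvectracer{t}+\gvectracex{t}\w_t)]$. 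A cleaner route that sidesteps the explicit series is induction on the recursive form $\glambda{t+1}=R_{t+2}+\gamma_{t+2}(1-\lambda_{t+2})\x_{t+2}^\top\w_t+\gamma_{t+2}\lambda_{t+2}\glambda{t+2}$, verifying that one step of the $(\gvectracer{t},\gvectracex{t})$ recursion exactly absorbs the newly exposed term.

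The main obstacle is the index/coefficient bookkeeping in the re-anchoring step: one has to check that the factors produced by $\ztrace_t=\gtwotrace{t}$ — which decays with the \emph{squared} discount $\gambar=\rho^2\gamma^2\lambda^2$ — mesh with the single-$\rho$, single-$\gamma\lambda$ factors produced by the unrolled $\glambda{t+1}$ so as to give the stated decay $\rho_t\gamma_t\lambda_t$ for $\gvectracer{t}$ and $\gvectracex{t}$ while being fed by $\gtwotrace{t-1}$; getting the powers of $\rho$ and the start/end indices of the $\gamma\lambda$ products to line up is the delicate part. The remaining ingredients are routine: the Fubini interchange and the stationarity shifts are justified by the second-moment finiteness assumption together with the steady-state-distribution convention, and one must keep $\w=\w_t$ frozen so that only features (never weights) are moved by the time shifts — which is what makes the tidy factor $\gvectracex{t}\w_t$, rather than some shifted-weight expression, appear on the right-hand side.
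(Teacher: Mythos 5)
Your proposal is correct and follows essentially the same route as the paper: split $\rbar_{t+1}$ into the observable $\rho_t^2\gbar_t^2$ piece plus the cross term with $\glambda{t+1}$, then unroll the return forward and the trace backward, use stationarity/index shifting to re-anchor, and recognize the resulting series as $\gvectracer{t}+\gvectracex{t}\w_t$ with $\w_t$ frozen (the paper carries this out as an explicit telescoping expansion with repeated index shifts rather than a Fubini-plus-re-anchoring argument, but the content is the same). The only bookkeeping caveat is that the off-policy $\lambda$-return itself carries $\rho$ factors when unrolled (the paper inserts $\rho_t$ at each expansion step), which is exactly the coefficient-matching delicacy you already flag.
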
 
\begin{corollary}\label{cor_lamvtd}
For $\lambda_t = 1$ for all $t$, 
\begin{align*}
\E[\rbar_{t+1}  \ztrace_t] = E[\rho_t^2 R_{t+1}^2 \gtwotrace{t} ] + 2 E[R_{t+1} \gvectracer{t} ]
\end{align*}
where
\vspace{-0.2cm}
\begin{align*}
\gtwotrace{t} &=\x_t+ \rho_{t-1}^2\gamma_{t}^2 \gtwotrace{t-1}\\
\gvectracer{t} &= \rho_t \gamma_{t} (R_{t} \gtwotrace{t-1}  + \gvectracer{t-1})
.
\end{align*}
\end{corollary}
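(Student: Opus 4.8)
The plan is to obtain Corollary~\ref{cor_lamvtd} as a direct specialization of Theorem~\ref{thm_mse}: set $\lambda_t = 1$ for every $t$ (consistently with the algorithm, which also takes $\lambar_t = 1$), substitute into the quantities appearing in Theorems~\ref{thm_main}--\ref{thm_mse}, and simplify each ingredient in turn. First I would record the one-step collapses: since $\oneminuslam{t+1} = 0$, the bootstrap correction in $\gbar_t = R_{t+1} + \gamma_{t+1}\oneminuslam{t+1}\x_{t+1}^\top\w$ vanishes, so $\gbar_t = R_{t+1}$; and $\gambar = \rho_t^2\gamma_{t+1}^2\lambda_{t+1}^2$ loses its $\lambda^2$ factor, becoming a pure squared discount. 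Feeding this into the forward trace $\gtwotrace{t}$ of Theorem~\ref{thm_main} (and into the $\gtwotrace{t-1}$ occurrences inside $\gvectracer{t}$ and $\gvectracex{t}$) reduces it, once the indices are lined up, to the eligibility-trace recursion $\gtwotrace{t} = \x_t + \rho_{t-1}^2\gamma_t^2\,\gtwotrace{t-1}$ asserted in the corollary.

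Next I would dispose of the two auxiliary traces of Theorem~\ref{thm_mse}. The key observation is that the recursion for $\gvectracex{t}$ carries the prefactor $\gamma_t\,\oneminuslam{t}$, which is identically $0$ when $\lambda_t = 1$; hence $\gvectracex{t} = \rho_t\gamma_t\,\gvectracex{t-1}$, and since $\gvectracex{\cdot}$ is initialized at the zero matrix it stays zero for all $t$. Consequently the whole $\gvectracex{t}\w_t$ contribution drops out of the identity, which is exactly why the corollary no longer mentions the value weights $\w_t$. For the remaining trace, substituting $\lambda_t = 1$ into $\gvectracer{t} = \rho_t\gamma_t\lambda_t(R_t\,\gtwotrace{t-1} + \gvectracer{t-1})$ reproduces verbatim the stated $\gvectracer{t} = \rho_t\gamma_t(R_t\,\gtwotrace{t-1} + \gvectracer{t-1})$. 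Plugging $\gbar_t = R_{t+1}$ and $\gvectracex{t} = 0$ into the conclusion of Theorem~\ref{thm_mse} then leaves $\E[\rho_t^2 R_{t+1}^2\,\gtwotrace{t}]$ for the first term and a multiple of $\E[R_{t+1}\,\gvectracer{t}]$ for the second (up to a power of $\rho_t$ that the bookkeeping below must pin down), which I would finally reconcile with the right-hand side claimed in the corollary.

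The step I expect to be fiddly --- and the only one that is not pure substitution --- is the bookkeeping of importance-sampling powers and time indices in that last reconciliation. The Theorem~\ref{thm_mse} statement writes the leading ratio as $\rho_{t+1}^2$, whereas the definition $\rbar_{t+1} = \rho_t^2\gbar_t^2 + 2\rho_t^2\gamma_{t+1}\lambda_{t+1}\gbar_t\glambda{t+1}$ only ever introduces $\rho_t^2$, and in the cross term one factor of $\rho_t$ migrates into the backward trace $\gvectracer{t}$ during the forward--backward re-indexing (so that the corollary lands on $2\,\E[R_{t+1}\,\gvectracer{t}]$ rather than $2\,\E[\rho_t^2 R_{t+1}\,\gvectracer{t}]$). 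As a sanity check I would therefore re-derive the cross term of Theorem~\ref{thm_mse} from scratch for the $\lambda \equiv 1$ case: expand $\glambda{t+1}$ as a $\rho$-weighted discounted sum of future rewards, multiply by $\rho_t^2\gamma_{t+1}R_{t+1}\,\gtwotrace{t}$, take the expectation, and reorganize the resulting double sum (changing the order of summation) so that the future rewards collapse into the past-accumulating trace $\gvectracer{t}$ --- the standard forward--backward telescoping step. Matching the two $\rho$-indexings at that point is the crux; everything else is routine substitution.
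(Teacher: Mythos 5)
Your proposal is correct and is essentially the paper's (implicit) argument: the corollary follows by substituting $\lambda_t \equiv 1$ into Theorem~\ref{thm_mse}, so that $\gbar_t = R_{t+1}$, $\gambar_{t+1} = \rho_t^2\gamma_{t+1}^2$ (giving the stated recursion for $\gtwotrace{t}$), and $\gvectracex{t}$ vanishes under zero initialization since its driving term carries the factor $(1-\lambda_t)=0$, leaving only the $\gvectracer{t}$ trace. Your flagged bookkeeping point is also right: the $\rho_{t+1}^2$ in the statement of Theorem~\ref{thm_mse} should be $\rho_t^2$ (consistent with the definition of $\rbar_{t+1}$), and in the cross term the $\rho^2$ factor is already absorbed by the index-shifting into the backward trace in the last line of that theorem's proof, which is exactly the indexing the corollary uses.
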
 
To derive the VTD algorithm, we take the gradient of the \varMSPBE.
As this again parallels GTD($\lambda$), we include the derivation in
the appendix for completeness and provide only the final result here. 
\begin{align*}
&-\tfrac{1}{2} \nabla \varMSPBE(\hsq) \\
&=  \E[\deltalambar{t} \x_t]  - \E[\gambar_{t+1} (1-\lambar_{t+1}) \xvec_{t+1} \gtwotrace{t}^\top]\E[\x_t \x_t]^{-1} \E[\deltalambar{t} \x_t]
.
\end{align*}
As with previous gradient TD algorithms, we will learn an
auxiliary set of weights $\hsq$ to 
estimate a part of this objective: $\E[\x_t \x_t]^{-1} \E[\deltalambar{t} \x_t]$. 
To obtain such an estimate, notice that $\auxhsq = \E[\x_t \x_t]^{-1} \E[\deltalambar{t}\x_t]$
corresponds to an LMS solution, where the goal is to obtain $\x_t^\top\auxhsq$
that estimates $\E[\deltalambar{t} | \xvec_t]$.
Therefore, 
we can use an LMS update
for $\auxhsq$, giving the final set of update equations for VTD: 
\begin{align*}
\bar{g}_{t+1} &\gets \x_{t+1}^\top \herr\\
\rbar_{t+1} &\gets  \rho_t^2\gbar_{t}^2 + 2 \rho_t^2\gamma_{t+1} \lambda_{t+1} \gbar_{t} \bar{g}_{t+1} \\
\gambar_{t+1} &\gets \rho_t^2 \gamma_{t+1}^2 \lambda_{t+1}^2\\
\deltabar_t &\gets  \rbar_{t+1} +  \gambar_{t+1} \xvec_{t+1}^\top \hsq_t - \xvec_{t}^\top \hsq_t\\ 
\hsq_{t+1} &\gets \hsq_t + \hsqauxstep \deltabar_t \gtwotrace{t} -  \hsqauxstep \gambar_{t+1} (1-\lambar_{t+1})\xvec_{t+1} \gtwotrace{t}^\top \auxhsq_t \\
\auxhsq_{t+1} &\gets \auxhsq_t + \hsqauxstep \deltabar_t \gtwotrace{t} -  \hsqauxstep\x_t^\top \auxhsq_t \x_t
.
\end{align*}
%
%
For \lamname, we set $\lambar_{t+1} = 1$, causing the term with the auxiliary weights to be multiplied by $1-\lambar_{t+1}$,
and so removing the need to approximate $\auxhsq$.


\section{Related work}

There has been a significant effort to empirically investigate $\lambda$,
typically using batch off-line computing and model-based techniques. 
 Sutton and Singh \citet{sutton1994onstep} investigated tuning both $\alpha$ and $\lambda$. They proposed three algorithms, the first two assume the underlying MDP has no cycles, and the third makes use of an estimate of the transition probabilities and is thus of most interest in tabular domains. Singh and Dayan \citet{singh1996analytical} provided analytical expression for bias and variance, given the model.
They suggest that there is a largest feasible step-size $\alpha$, below which bias converges to zero and variance converges to a non-zero value, and above which bias and/or variance may diverge. 
Downey and Sanner \citet{downey2010temporal} used a Bayesian variant of TD learning, requiring a batch of samples and off-line computation, but did provide an empirical demonstration off optimally setting $\lambda$ after obtaining all the samples. Kearns and Singh\citet{kearns2000bias} compute a bias-variance error bound for a modification of TD called phased TD. In each discrete phase the algorithm is given $n$ trajectories from each state. Because we have $n$ trajectories in each state the effective learning rate is $1/n$ removing the complexities of sample averaging in the conventional online TD-update. 
 The error bounds are useful for, among other things, computing a new $\lambda$ value for each phase which outperforms any fixed $\lambda$ value, empirically demonstrating the utility of changing $\lambda$. 

There has also been a significant effort to theoretically characterizing $\lambda$.
Most notably, the work of Schapire and Warmuth \cite{schapire1996ontheworst} contributed a finite sample analysis of incremental TD-style algorithms. They analyze a variant of TD called TD$^\star(\lambda)$, which although still linear and incremental,
computes value estimates quite differently. 
The resulting finite sample bound is particularly interesting, as it does not rely on model assumptions,
using only access to 
a sequence of feature vectors, rewards and returns. 
Unfortunately, the bound cannot be analytically minimized to produce an optimal $\lambda$ value. 
They did simulate their bound, further verifying the intuition that
$\lambda$ should be larger if the best linear predictor is inaccurate, small if accurate and an intermediate value
otherwise. 
Li \citet{li2008aworst} later derived similar bounds for another gradient descent algorithm, called residual gradient. This algorithm, however, does not utilize eligibility traces and converges to a different solution than TD methods when function approximation is used \cite{sutton2009fast}. 

Another approach involves removing the $\lambda$ parameter altogether, in an effort to improve robustness.
Konidaris et al. \citet{konidaris2011td} introduced a new TD method called TD$_\gamma$.
Their work defines a plausible set of assumptions implicitly made when constructing the $\lambda$-returns, and then relaxes one
of those assumptions. They derive an exact (but computationally expensive) algorithm, TD$_\gamma$, that no longer depends on a choice of $\lambda$
and performs well empirically in a variety of policy learning benchmarks. The incremental approximation to TD$_\gamma$ also performs reasonably well, but appears to be somewhat sensitive to the choice of
meta parameter $C$, and often requires large $C$ values to obtain good performance. This can be problematic,
as the complexity grows as $O(C \xdim)$, where $\xdim$ is the length of the trajectories---not linearly in the feature vector size. Nonetheless, TD$_\gamma$ constitutes a reasonable way to reduce parameter sensitivity in the on-policy setting.
Garcia and Serre\citet{garcia2001from} proposed a variant of Q-learning, for which the optimal value of $\lambda$ can be computed online. Their analysis, however, was restricted to the tabular case.
Finally, Mahmood et al. \citet{mahmood2014weighted} introduced weighted importance sampling
for off-policy learning; though indirect,
this is a strategy for enabling larger $\lambda$ to be selected, without destabilizing off-policy learning. 

This related work has helped shape our intuition on the role of $\lambda$,
and, in special cases, provided effective strategies for adapting $\lambda$.
In the next section, we add to existing work with
an empirical demonstration of \lamname, the first $\lambda$-adaptation algorithm
for off-policy, incremental learning,
developed from a well-defined, greedy objective. 
 




\section{Experiments}

\newcommand{\gwidth}{0.3\textwidth}
\newcommand{\addlabel}[1]{\parbox{0.5cm}{\vspace{-3.0cm}\rotatebox[origin=c]{90}{#1}}\hspace{-0.5cm}}

 \begin{figure*}
\raggedleft
\hspace*{-1.2cm}
\begin{tabular}{cccc}
\addlabel{Error}  & \includegraphics[width=\gwidth]{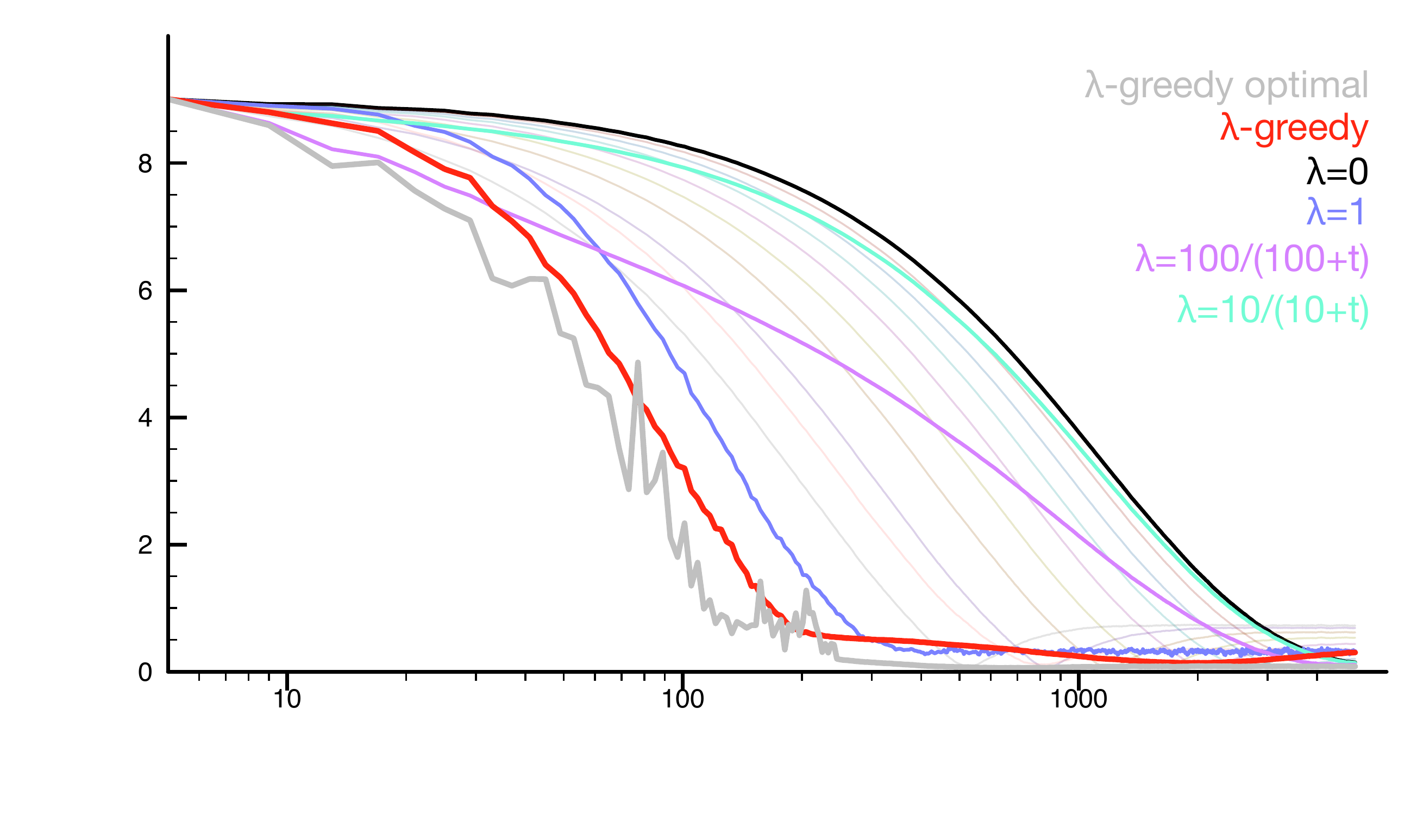}  & \includegraphics[width=\gwidth]{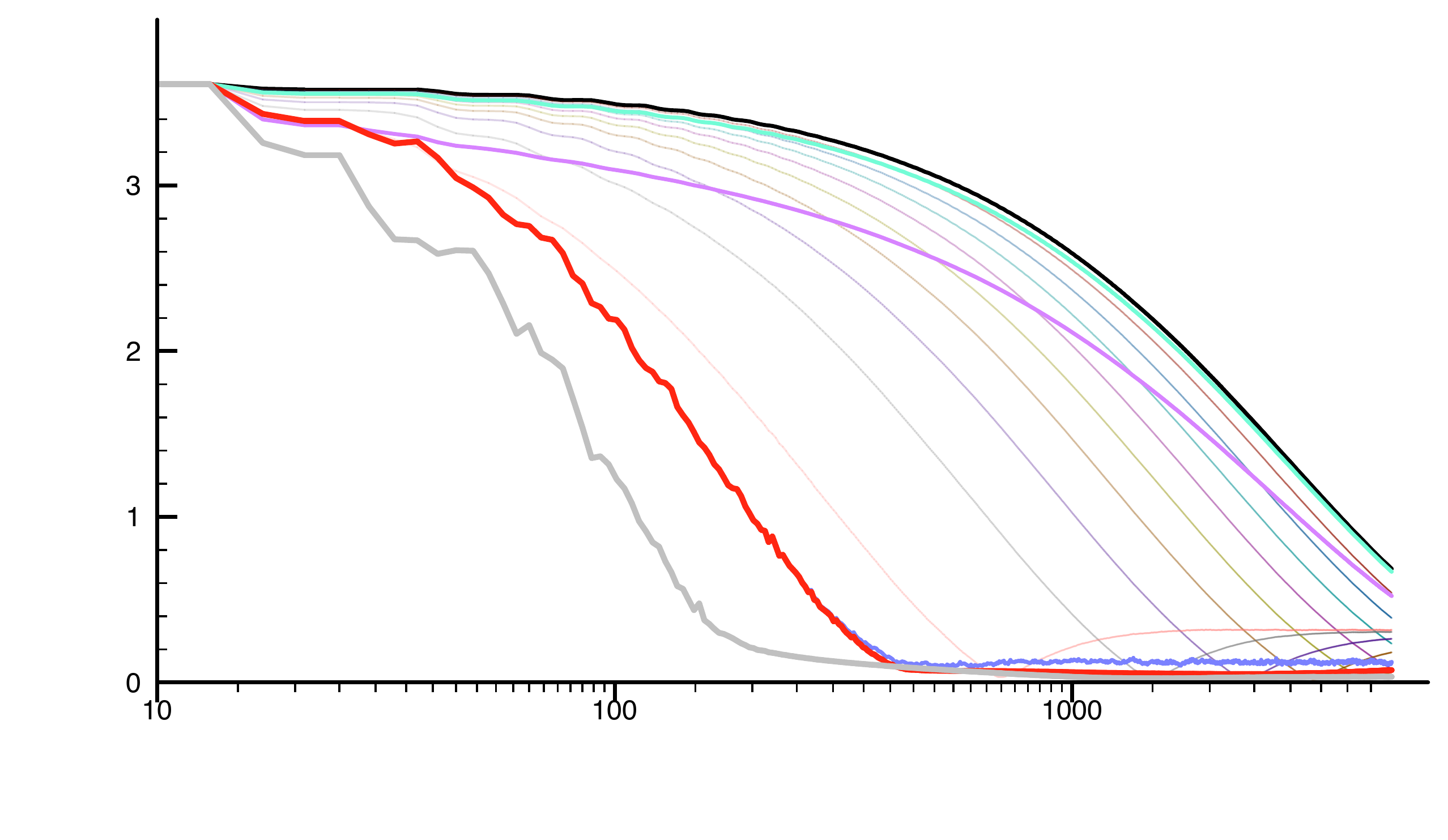} &    \includegraphics[width=\gwidth]{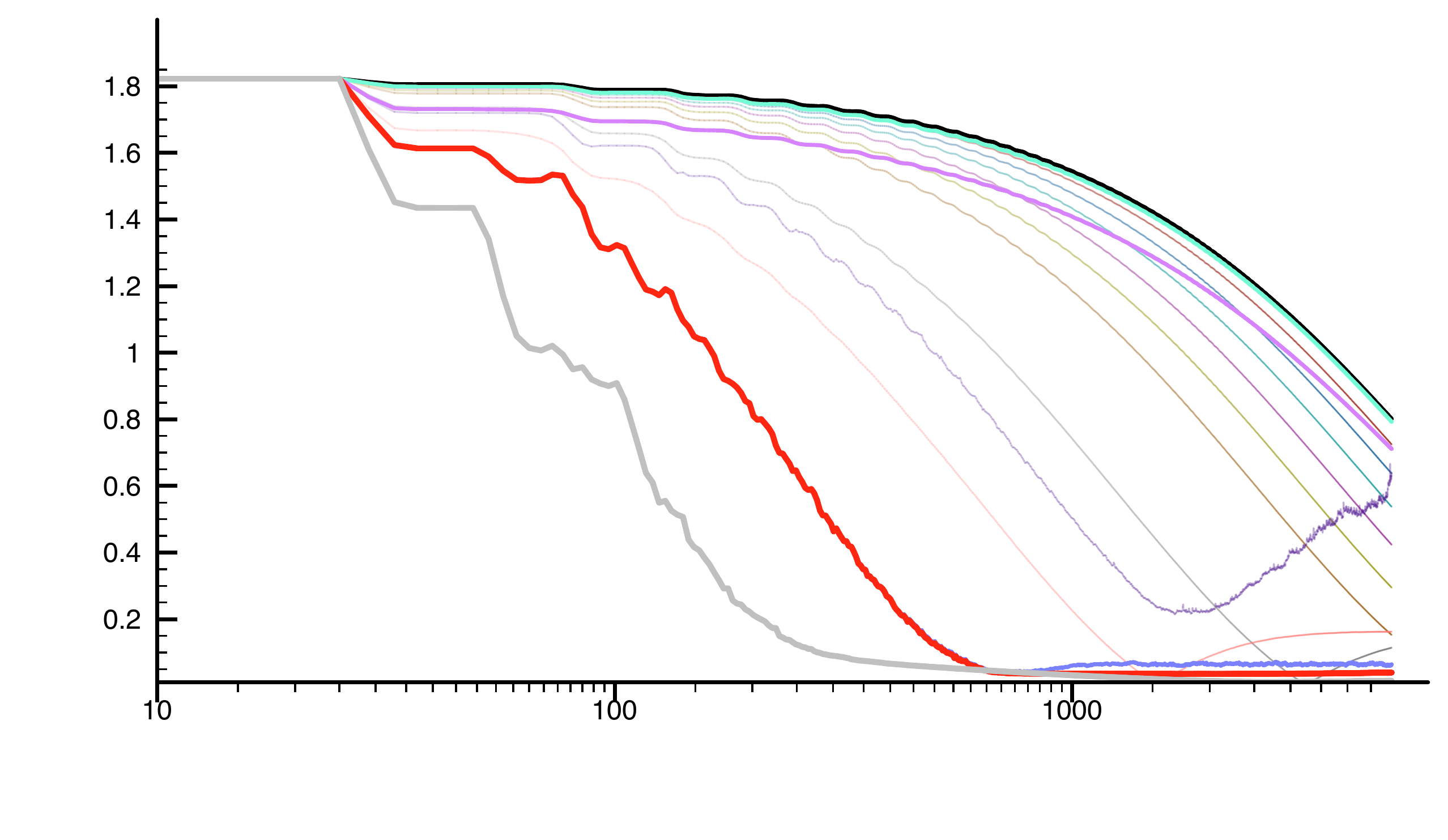}\\
\addlabel{$\lambda$ value} &  \includegraphics[width=\gwidth]{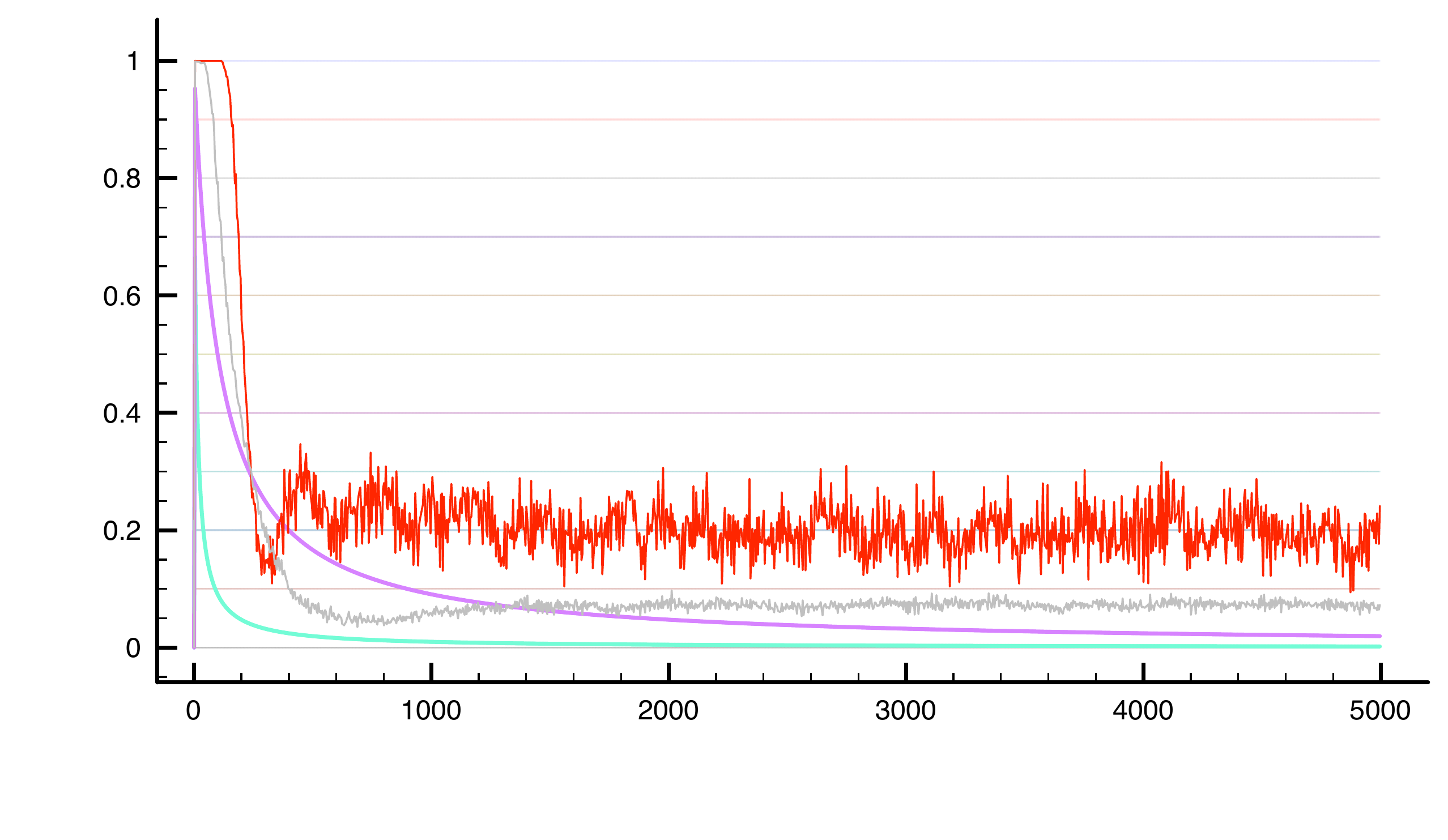} & \includegraphics[width=\gwidth]{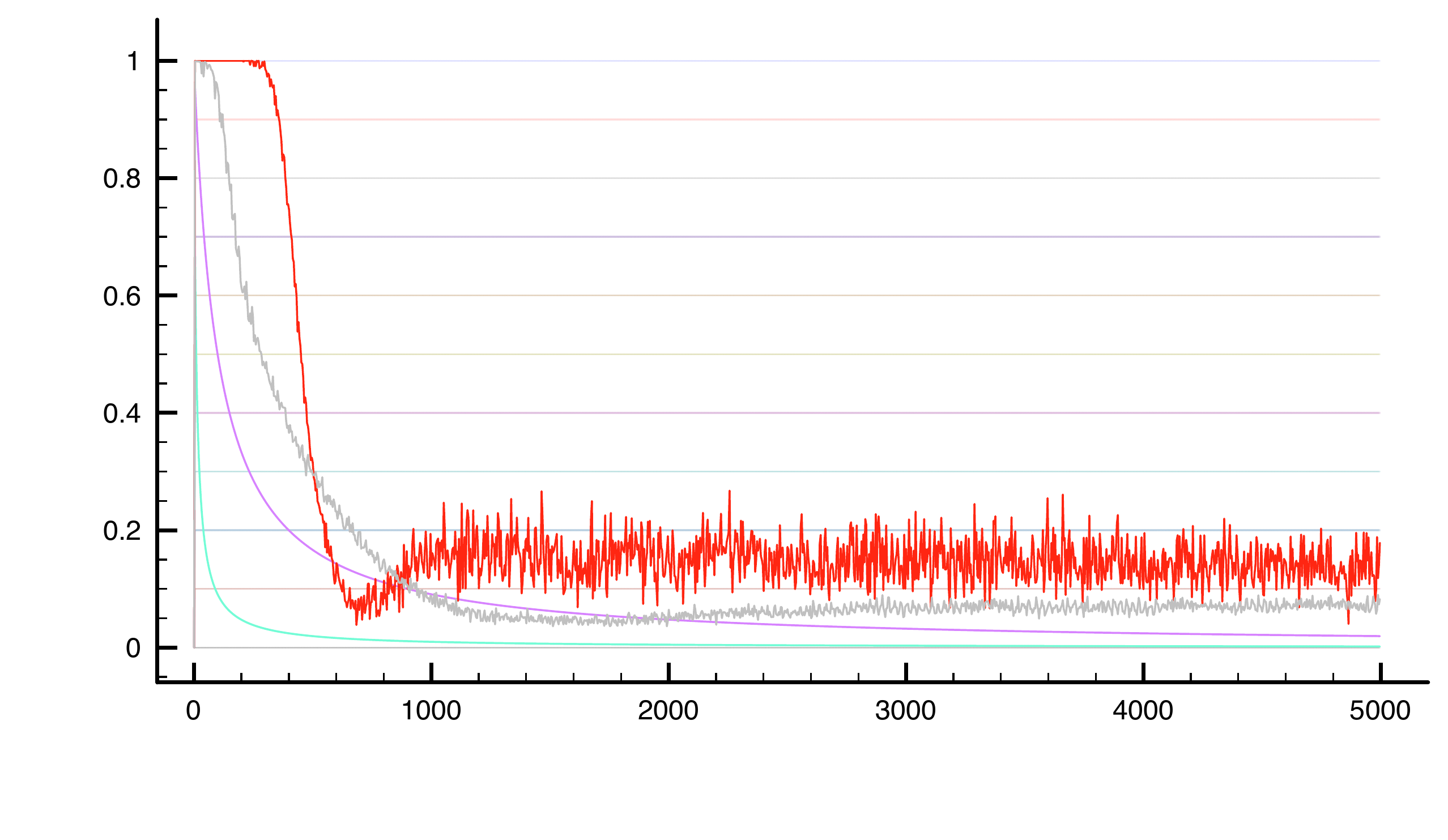} \hspace{-2.0cm}&    \includegraphics[width=\gwidth]{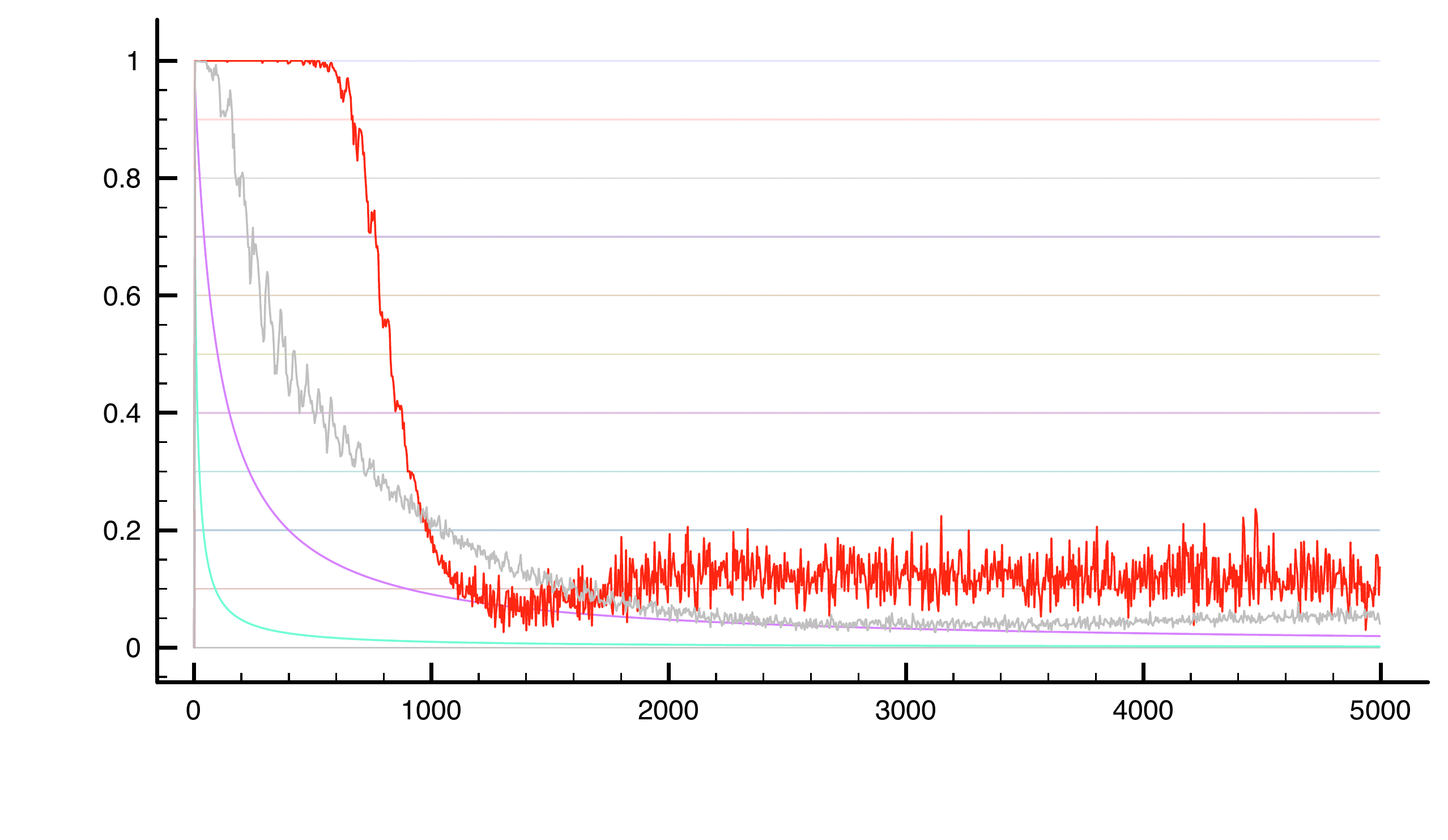}\\
  \vspace{-.2cm}
& Chain 10 & Chain 25  & Chain 50
 \end{tabular}

 \caption{{\bf On-policy} learning performance of GTD($\lambda$) with different methods of adapting $\lambda$, for three sizes in the ringworld domain. The first row of results plots the mean squared value error verses time for several approaches to adapting $\lambda$: the optimal $\lambda$-greedy algorithm, our approximation algorithm, several fixed values of $\lambda$, and two fixed decay schedules. The second row of results plots the $\lambda$ values over-time used by each algorithm. We highlight early learning by taking the log of y axis, as the algorithms should be able to exploit the low variance to learn quickly. For off-policy learning, there is more variance due to importance sampling and so we prefer long-term stability. We therefore highlight later learning using the log of the x-axis.}\label{figure_on_learning}
 \end{figure*}
 \begin{figure*}
\centering
\hspace*{-1.2cm}
\begin{tabular}{ccc}
 \addlabel{Error} & \includegraphics[width=\gwidth]{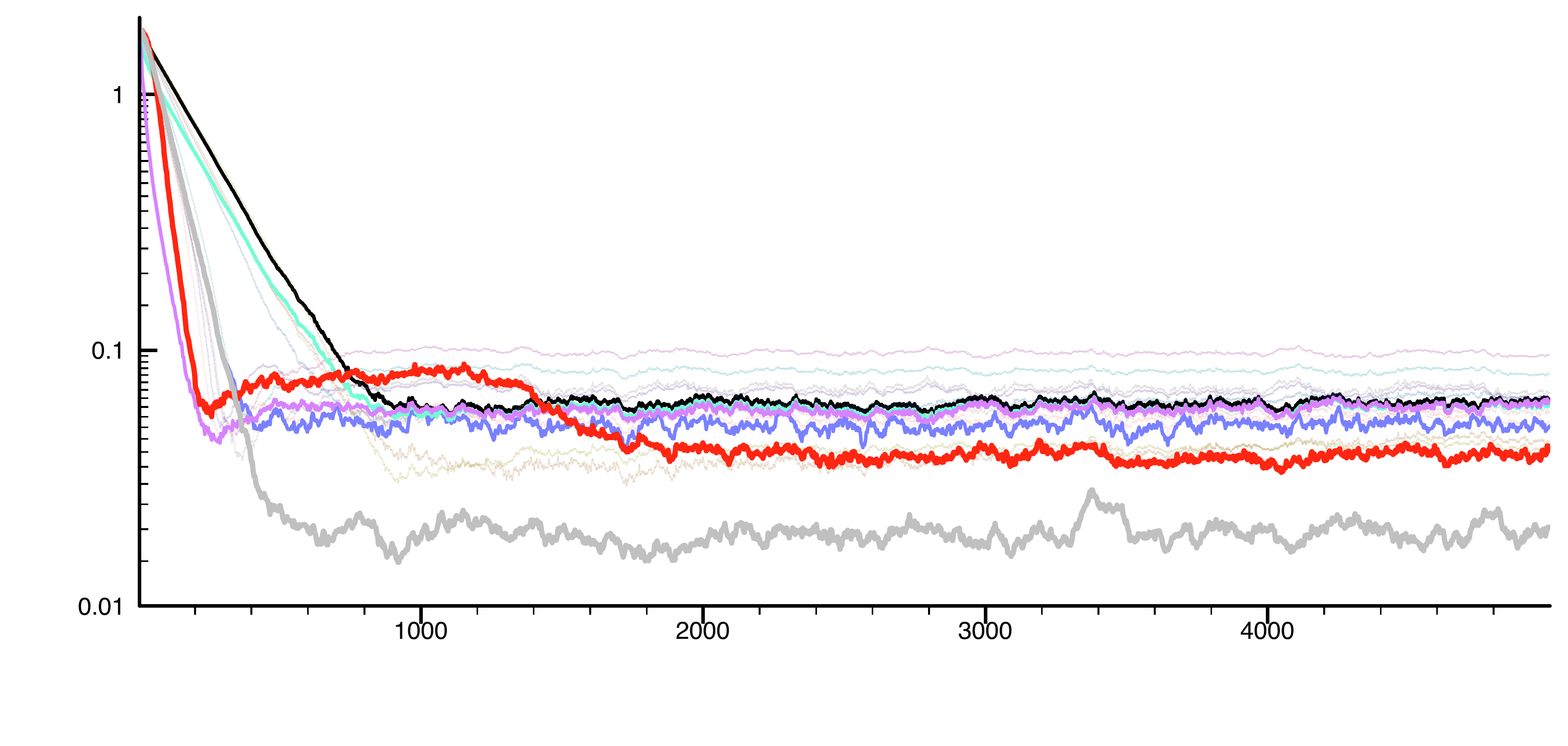} \hspace{-2.0cm}& \includegraphics[width=\gwidth]{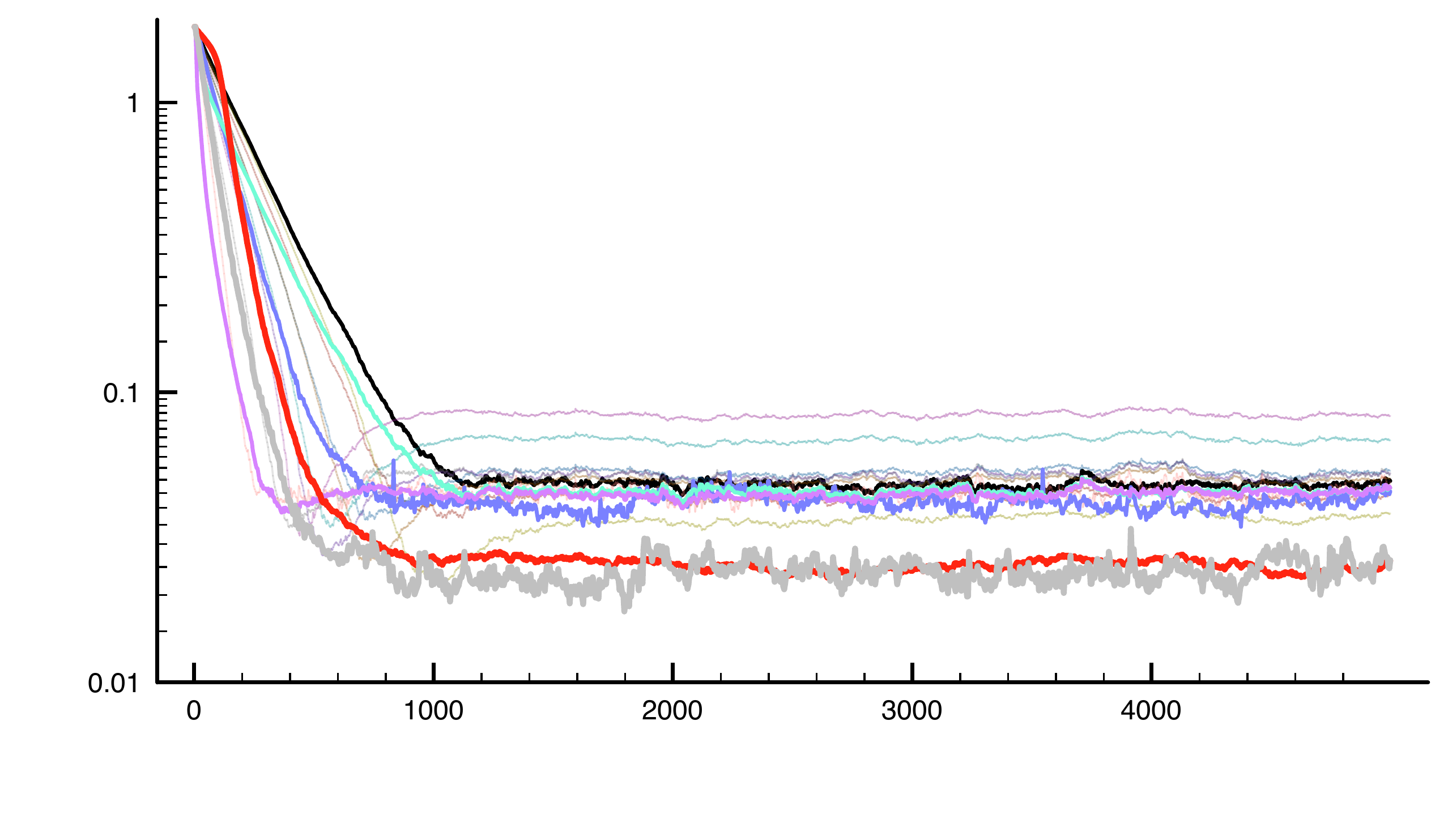} \\ 
  \addlabel{$\lambda$ value} \hspace{-2.0cm}& \includegraphics[width=\gwidth]{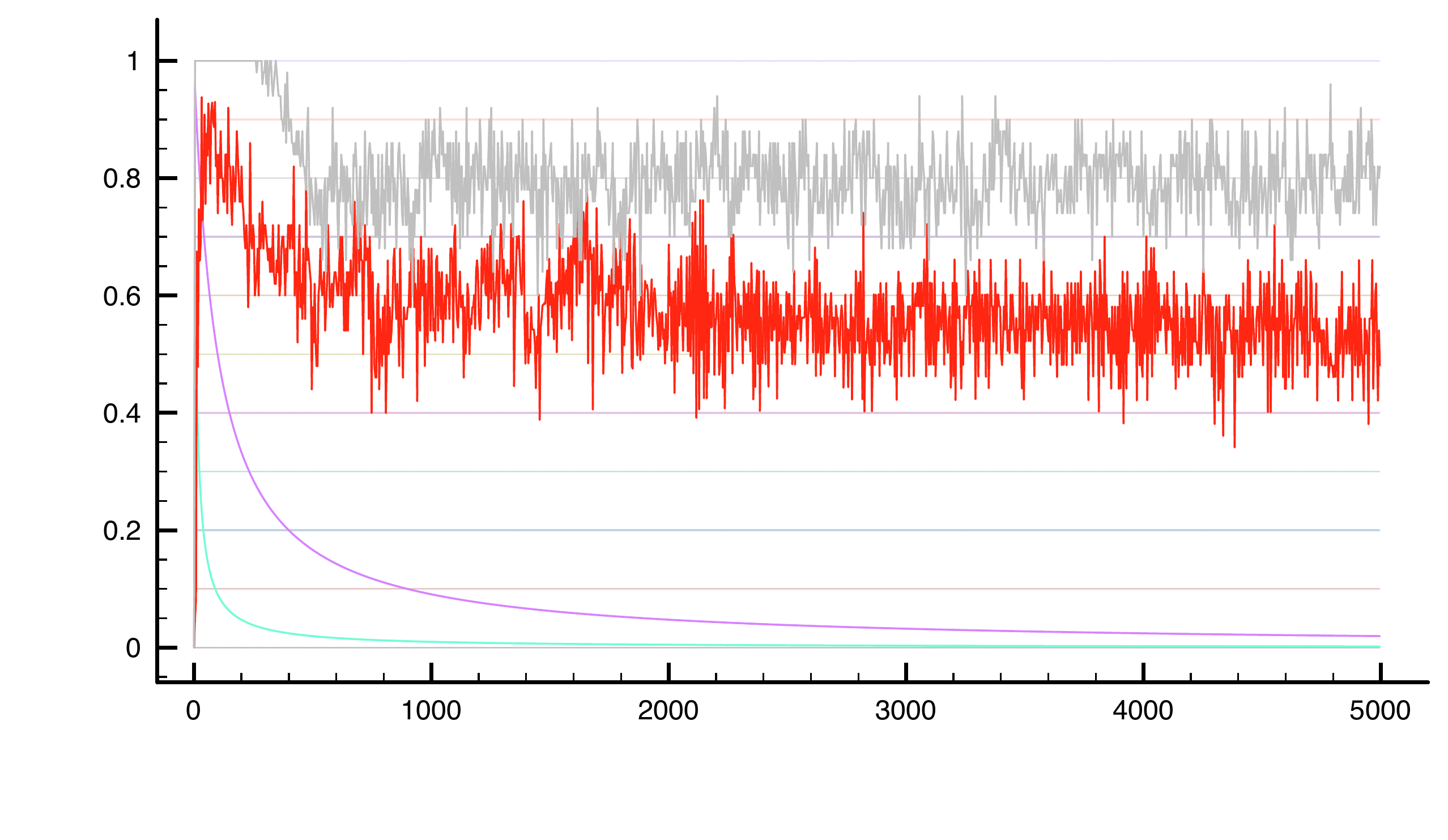} \hspace{-2.0cm}& \includegraphics[width=\gwidth]{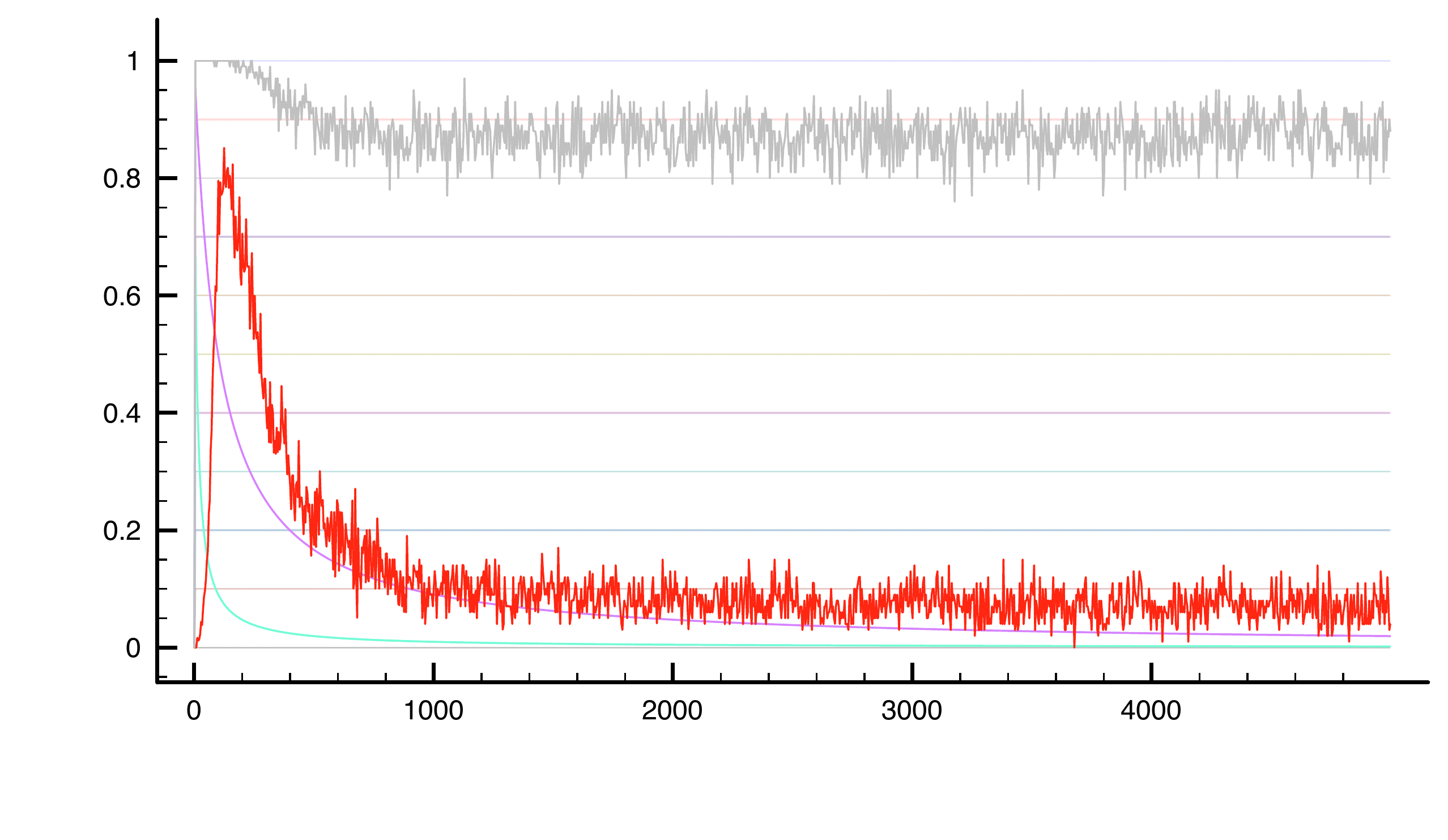} \\
\hspace{-2.0cm}& $\pi(s,\text{right}) = 0.95, \mu(s,\text{right})=0.85$ \hspace{-2.0cm}& $\pi(s,\text{right}) = 0.95, \mu(s,\text{right})=0.75$ 
   \vspace{-.2cm}
 \end{tabular}
 \caption{{\bf Off-policy} learning performance of GTD($\lambda$) with different methods of adapting $\lambda$, for two different configurations of $\pi$ and $\mu$ in the size 10 ringworld domain. The first row of results plots the mean squared value error verses time for several approaches to adapting $\lambda$. In off-policy learning, there is more variance due to importance sampling and so we prefer long-term stability. We therefore highlight later learning using the log of the x-axis. The second row of results plots the $\lambda$ values over-time used by each algorithm. }\label{figure_off_learning}
 \end{figure*}
  \begin{figure*}
\centering
\hspace*{-1.2cm}
\begin{minipage}[b]{.35\textwidth}
\centering
  \includegraphics[width=\textwidth]{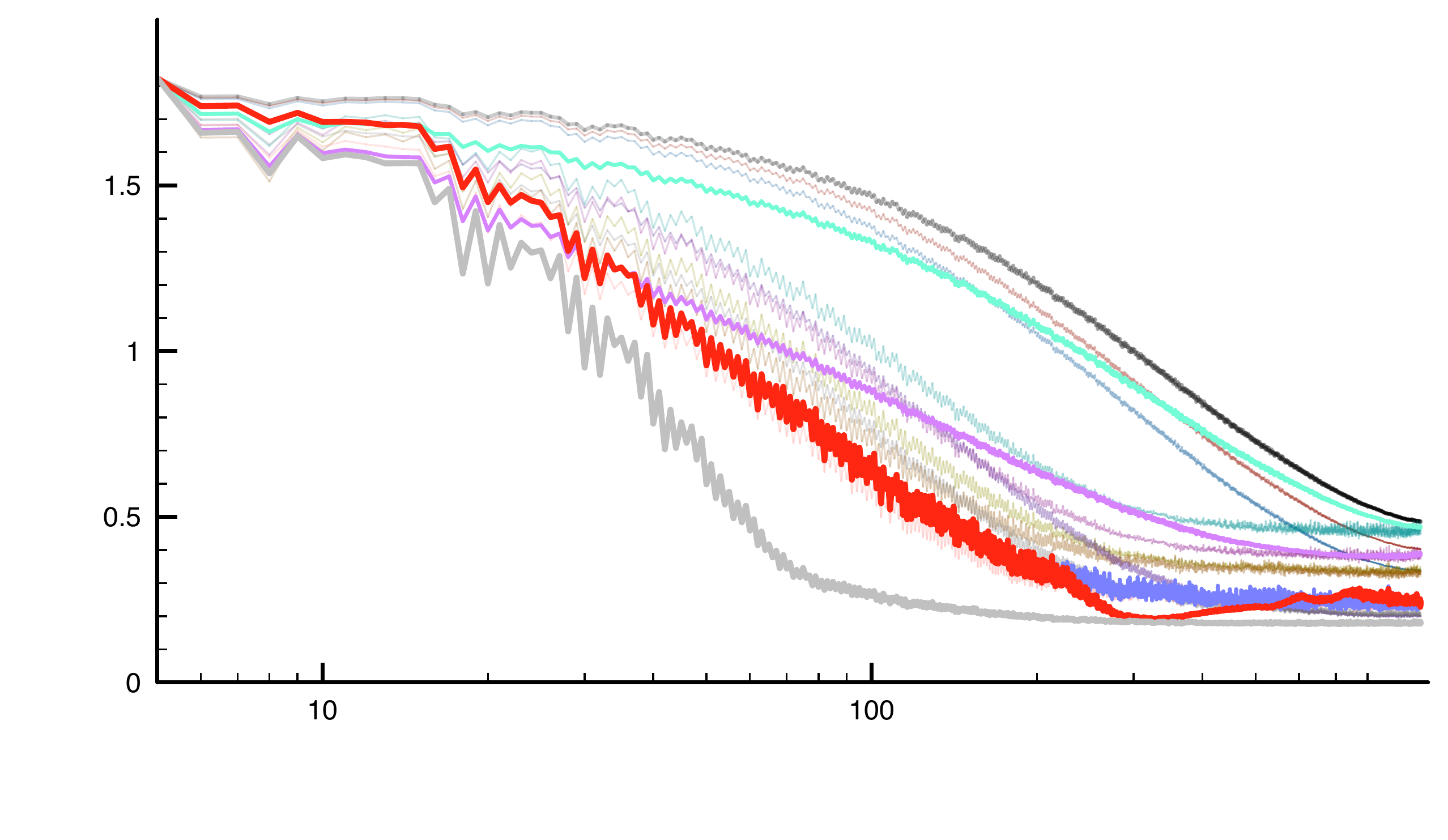} 
  \end{minipage}
  \begin{minipage}[b]{.35\textwidth}
\centering
 \includegraphics[width=\textwidth]{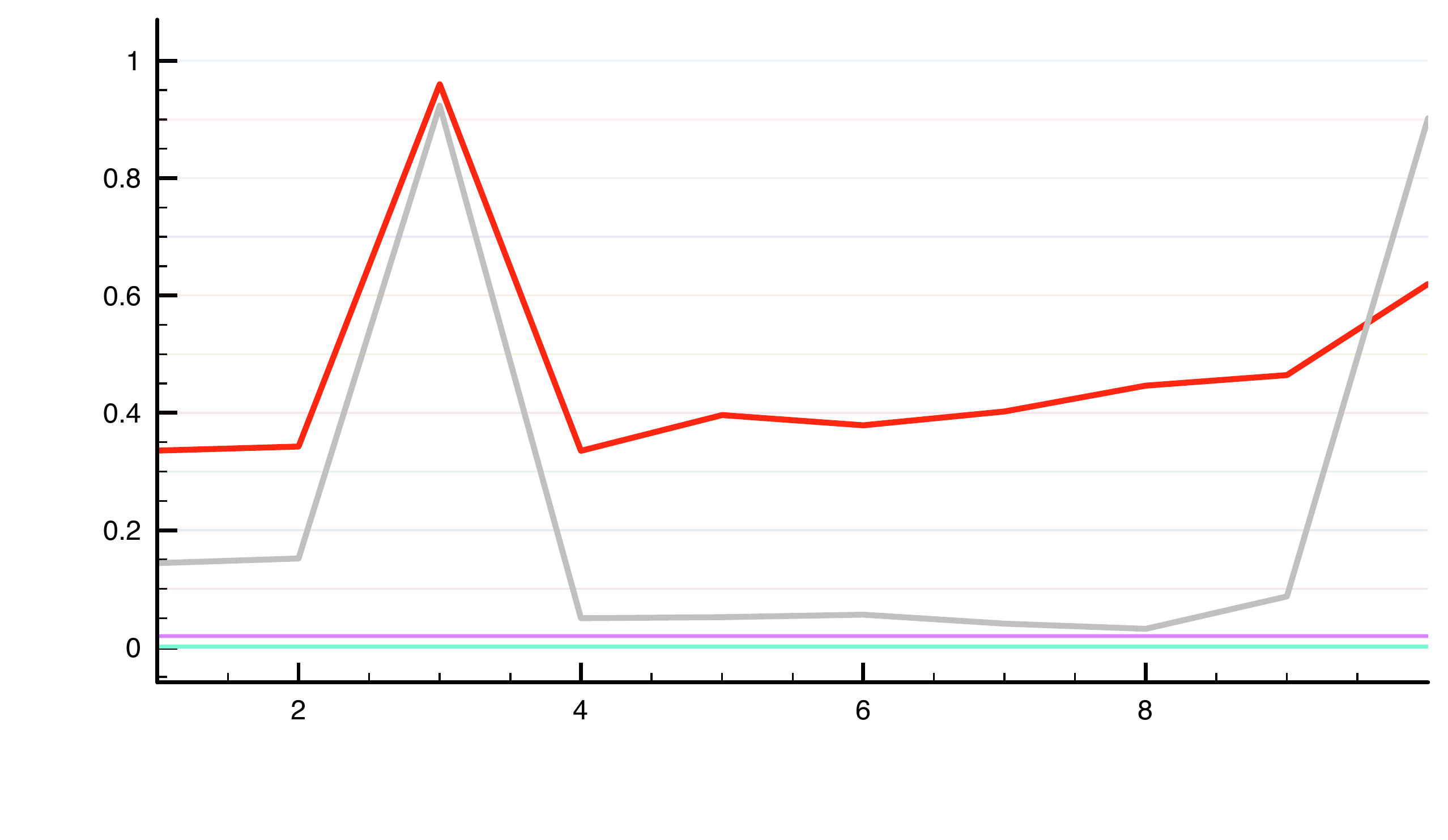}
 \end{minipage}
\begin{minipage}[b]{.25\textwidth}
\centering
 \caption{A closer examination of the effects of state aliasing on the choice of state-based $\lambda$, for the length 10 chain with $\gamma = 0.95$, under on-policy sampling. State three is aliased with the last state. The left graph shows the learning curves over 5000 steps, and the right graph shows the learned $\lambda$ function in each state at the end of the run. }\label{figure_lambda}
 \end{minipage}
 \end{figure*}
 
 We investigate \lamname\
 in a ring-world, under both on and off-policy sampling.
 This ring-world was previously introduced as a suitable domain for investigating $\lambda$ \cite{kearns2000bias}.
 We varied the length of the ring-world from
 $N = 10, 25, 50$. 
 The reward is zero in every state,
 except
 for two adjoining states that have +1 and -1 reward,
 and are terminal states.
 The agent is teleported back to the middle
 of the ring-world upon termination.
 The target policy is to take action ``right" with 95\% probability,
 and action ``left" with 5\% probability. 
The feature representation is a tabular encoding of states with binary identity vectors, but we also examine the effects of aliasing state
values to simulate poor generalization: a common case where the true
value function cannot be represented. 
The length of the experiment is a function of the problem size, $N \times 100$, proportionally scaling 
the number of samples for 
longer problem instances.

We compared to fixed values of $\lambda = 0, 0.1, \ldots, 0.9, 1.0$ 
and to two time-decay schedules, $\lambda_t = 10/(10+t), \lambda_t = 100/(100+t)$,
which worked well compared to several other tested settings.
The discount factor is $\gamma = 0.99$ for the on-policy chains and 0.95 for the off-policy chains.
We include the optimal \lamname, which computes $\hvar$ and $\hsq$ using closed form solutions,
defined by their respective Bellman operators.
For \lamname, the initialization was $\hvar = 0.0$ and 
$\herr = \frac{Rmax}{1-\gamma} \times \onevec$ for on-policy, to match the rule-of-thumb of initializing with high lambdas,
and the opposite for off-policy, to match the rule-of-thumb of more caution in off-policy domains. 
This max return value is a common choice for optimistic initialization, and prevented inadvertent evaluator bias
by overly tuning this parameter. We fixed the learning-rate parameter for \lamname\ to be equal to equal to $\alpha$ used in learning the value function ($\w$), again to demonstrate performance
 in less than ideal settings. Sweeping the step-size for \lamname\ would
 improve performance. 

The performance results in on and off-policy are summarized in Figure \ref{figure_on_learning} and \ref{figure_off_learning}.
We report the absolute value error compared to the true value function,
which is computable in this domain for each of the settings. 
We average over 100 runs, and report the results for the best
parameter settings for each of the algorithms with 12 values of 
$\alpha \in \{0.1\times 2^j|j=-6,-6,...,5,6\}$, 11 values of $\eta \in \{2^j|j=-16,-8,-4,\ldots, 4, 8, 16\}$ ($\alpha_\h = \alpha\eta$).

In general, we find that \lamname\ works
well across settings.
The optimal \lamname\ consistently performs the best, indicating
the merit of the objective. Estimating $\hvar$ and $\hsq$ typically
cause \lamname\ to perform more poorly, indicating an opportunity to improve these algorithms
to match the performance of the optimal, idealistic version. 
In particular, we did not optimize the meta-parameters in \lamname.
For the fixed values of $\lambda$ and decay schedules,
we find that they can be effective for specific instances, but
do not perform well across problem settings. In particular, the fixed decay schedules
settings appear to be un-robust to an increasing chain length
and the fixed $\lambda$ are not robust to the change from on-policy to off-policy. 

We also examined the $\lambda$ value selected by our \lamname\
algorithm plotted against time. For tabular features, $\lambda$ should converge to zero over-time,
since the value function can be approximated and so, at some point, no
bias is introduced by using $\lambda_t = 0$, but variance is reduced.
The algorithm does converge to a state-based $\lambda$ ($\lambda(s) \approx 0$ for all states), 
which was one of our goals to ensure we have a well-defined fixed point. 
Second, for aliased features, we expect that the final per-state $\lambda$
should be larger for the states that have been aliased.
The intuition is that one should not bootstrap on the values
of these states, as that introduces bias.
We demonstrate that this does indeed
occur, in Figure \ref{figure_lambda}. 
As expected, the  \lamname\ is
more robust to state aliasing, compared to the fixed strategies. State aliasing provides a concrete example of when we have less confidence in $\hat{v}$ in specific states, and an effective strategy to mitigate this situation is to set $\lambda$ high in those states.  

\section{Conclusion and discussion}

In this paper, we have proposed
the first linear-complexity $\lambda$ adaptation algorithm
for incremental, off-policy reinforcement learning.
We proposed an objective to greedily trade-off bias
and variance, and derived an efficient algorithm to obtain
the solution to this objective. We demonstrate
the efficacy of the approach in an on-policy and off-policy setting,
versus fixed values of $\lambda$ and time-decay heuristics.

This work opens up many avenues for efficiently selecting $\lambda$,
by providing a concrete greedy objective. 
One important direction is to extend the above analysis to use the recently introduced true-online traces \cite{vanseijen2014true};
here we focused on the more well-understood $\lambda$-return, but
the proposed objective is not restricted to that setting. 
There are also numerous
future directions for improving optimization of this objective. 
We used GTD($\lambda=1$) to learn the second moment of the $\lambda$-return inside \lamname.
Another possible direction is to simply explore using $\lambda<1$
or even least-squares methods to improve estimation inside \lamname.

There are also opportunities to modify the objective to consider variance into the future differently. The current objective is highly cautious, in that it assumes that only $\lambda_{t+1}$ can be modified, and assumes the agent will be forced to use the unbiased $\lambda_{t+i} = 1$ for all future time steps. As a consequence, the algorithm will often prefer to set $\lambda_{t+1}$ small, as future variance can only be controlled by the setting of $\lambda_{t+1}$.  
A natural relaxation of this strictly greedy setting is to recognize that the agent, in fact, has control over all future $\lambda_{t+i}, i>1$. 
The agent could remain cautious for some number of steps, assuming $\lambda_{t+i}$ will likely be high for $1<i\le k$ for some horizon $k$. However, we can assume that after $k$ steps, $\lambda_{t+i}$ we be reduced to a small value, cutting off the traces and mitigating the variance. 
The horizon $k$ could be encoded with an aggressive discount; multiplying the current discount $\gambar_{t+1}$ by a value less than 1. 
Commonly, a multiplicative factor less than 1 indicates a horizon of $\frac{1}{1-\text{factor}}$; for example, $\frac{1}{1-0.8} = 5$ gives a horizon of $k=5$. The variance term in the objective in \eqref{eq_lambda} could be modified to include this horizon, providing a method to incorporate the level of caution. 

Overall, this work takes a small step toward the goal of
automatically selecting the trace parameter, and thus one step closer toward parameter-free black-box application of reinforcement learning with many avenues for future research.
\\
{\small
{\bf Acknowledgements}\\
We would like to thank David Silver for helpful discussions
and the reviewers for helpful comments.
}
\bibliographystyle{abbrv}
\bibliography{aamas2015.bib}

\appendix

Recall that for squared return $\glambdasq{t}$, we use $\glambdasqbar_t$ to indicate
the $\lambar$-squared-return.
\begin{align*}
\glambdasqbar_{t} &\defeq \rbar_{t+1} + \gambar_{t+1} \left( (1-\lambar_{t+1}) \xvec_{t+1}^\top \hvar + \lambar_{t+1} \glambdasqbar_{t+1}\right)\\
\deltalambar{t} &\defeq \glambdasqbar_{t} -  \x_t^\top \hsq\\
\deltabar_{t} &\defeq \rbar_{t+1} + \gambar_{t+1} \xvec_{t+1}^\top \hvar  -  \x_t^\top \hsq\\
\rbar_{t+1} &\defeq \rho_t^2 \gbar_{t}^2 + 2 \rho_t^2 \gamma_{t+1} \lambda_{t+1} \gbar_{t} \glambda{t+1}\\
\gambar_{t+1} &\defeq \rho_t^2 \gamma_{t+1}^2 \lambda_{t+1}^2\\
\gbar_{t} &\defeq R_{t+1} + \gamma_{t+1} (1-\lambda_{t+1}) \x_{t+1}^\top \w 
.
\end{align*}
If $\lambar = 1$, then $\glambdasqbar_t = \glambdasq{t}$.
Note that the weights $\w$ in $\gbar_t$ are the weights for the main estimator that defines the $\lambda$-return,
not the weights $\hvar$ that estimate the second moment of the $\lambda$-return.

\noindent
\textbf{Theorem \ref{thm_main}}
For a given $\lambar: \Ss \rightarrow [0,1]$,
\begin{align*}
\E[\deltalambar{t} \x_t] &= \E[\deltabar_{t} \ztrace_{t}]
\end{align*}
where
\begin{align*}
\gtwotrace{t} &\defeq \x_t+ \gambar_{t} \lambar_{t}\gtwotrace{t-1}
\end{align*}
%
%
\begin{proof}
As in other TD algorithms \cite{maei2011gradient}, we use
index shifting to obtain an unbiased estimate of future values using traces.

\begin{align*}
\glambdasqbar_{t} 
&= \rbar_{t+1} + \gambar_{t+1} \left( (1-\lambar_{t+1}) \xvec_{t+1}^\top \hvar + \lambar_{t+1} \glambdasqbar_{t+1} \right) \\
&= \rbar_{t+1} + \gambar_{t+1} \xvec_{t+1}^\top \hvar - \gambar_{t+1} \lambar_{t+1} \xvec_{t+1}^\top \hvar + \gambar_{t+1} \lambar_{t+1} \glambdasqbar_{t+1}\\
&= \rbar_{t+1} + \gambar_{t+1} \xvec_{t+1}^\top \hvar + \gambar_{t+1} \lambar_{t+1} \deltalambar{t+1}\\
&= \rbar_{t+1} + \gambar_{t+1} \xvec_{t+1}^\top \hvar + \gambar_{t+1} \lambar_{t+1} \deltalambar{t+1}
\end{align*}
Therefore, for $\deltabar_{t} = \rbar_{t+1} + \gambar_{t+1} \xvec_{t+1}^\top \hvar  -  \x_t^\top \hsq$,
\begin{align*}
\E[\deltalambar{t} \x_t] &= \E[(\glambdasqbar_{t}   -  \x_t^\top \hsq) \x_t]\\
&= \E[(\deltabar_{t} + \gambar_{t+1} \lambar_{t+1} \deltalambar{t+1}) \x_t]\\
&= \E[\deltabar_{t} \x_t] + \E[\gambar_{t} \lambar_{t} \deltalambar{t} \x_{t-1}]\\
&= \E[\deltabar_{t} \x_t] + \E[\gambar_{t} \lambar_{t} (\deltabar_{t} + \gambar_{t+1} \lambar_{t+1} \deltalambar{t+1}) \x_{t-1}]\\
&= \E[\deltabar_{t} (\x_t + \gambar_{t} \lambar_{t} \x_{t-1}) ] + \E[\gambar_{t} \lambar_{t} \gambar_{t+1} \lambar_{t+1} \deltalambar{t+1} \x_{t-1}]\\
&= \E[\deltabar_{t} (\x_t + \gambar_{t} \lambar_{t} \x_{t-1}) ] + \E[\gambar_{t-1} \lambar_{t-1} \gambar_{t} \lambar_{t} \deltalambar{t} \x_{t-2}]\\
&= \ldots \\
&= \E[\deltabar_{t} \ztrace_{t} ] 
\end{align*}
%
\end{proof}

To enable the above recursive form, we need to ensure that 
$\rbar_{t+1}$ is computable on each step, given $S_t, S_{t+1}$.
We characterize the expectation of $\rbar_{t+1}$,
using the current unbiased estimate $\x_t^\top\herr$ of  $\E[\glambda{t+1} | S_{t+1}]$,
providing a way to obtain an unbiased sample of $\rbar_{t+1}$ on each step. 
\begin{proposition}
\begin{align*}
\E[\rbar_{t+1} | S_t, S_{t+1}] &=  \E[\rho_t^2 \gbar_{t}^2 | S_t, S_{t+1}] \\
&+ 2 \E[\rho_t^2 \gamma_{t+1} \lambda_{t+1} \gbar_{t} |S_t, S_{t+1}] \E[\glambda{t+1} | S_{t+1}]
\end{align*}
\end{proposition}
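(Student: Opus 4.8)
The plan is to expand $\rbar_{t+1}$ by its definition and then dispatch the only nontrivial piece, the cross term carrying the future return $\glambda{t+1}$, by conditioning on a finer $\sigma$-algebra and invoking the Markov property. By linearity of conditional expectation,
\begin{align*}
\E[\rbar_{t+1} \mid S_t, S_{t+1}] &= \E[\rho_t^2 \gbar_t^2 \mid S_t, S_{t+1}] + 2\,\E[\rho_t^2 \gamma_{t+1}\lambda_{t+1}\gbar_t\,\glambda{t+1} \mid S_t, S_{t+1}],
\end{align*}
where I used $\rbar_{t+1} \defeq \rho_t^2\gbar_t^2 + 2\rho_t^2\gamma_{t+1}\lambda_{t+1}\gbar_t\glambda{t+1}$. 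The first summand already equals the first term on the right-hand side of the claim, so the whole task reduces to rewriting the second summand.

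For the cross term, I would let $\mathcal{F}_{t+1}$ denote the $\sigma$-algebra generated by $(S_t, A_t, R_{t+1}, S_{t+1})$. Each factor of $\rho_t^2\gamma_{t+1}\lambda_{t+1}\gbar_t$ is $\mathcal{F}_{t+1}$-measurable: $\rho_t$ is a function of $(S_t,A_t)$, $\gamma_{t+1}$ and $\lambda_{t+1}$ are functions of $S_{t+1}$, and $\gbar_t = R_{t+1} + \gamma_{t+1}(1-\lambda_{t+1})\x_{t+1}^\top\w$ is a function of $R_{t+1}$ and $S_{t+1}$. Applying the tower rule through $\mathcal{F}_{t+1}$ and pulling that measurable factor out gives
\begin{align*}
\E[\rho_t^2\gamma_{t+1}\lambda_{t+1}\gbar_t\,\glambda{t+1} \mid S_t, S_{t+1}] &= \E\!\left[\rho_t^2\gamma_{t+1}\lambda_{t+1}\gbar_t\,\E[\glambda{t+1}\mid \mathcal{F}_{t+1}] \,\Big|\, S_t, S_{t+1}\right].
\end{align*}
Now $\glambda{t+1}$ is a functional of the trajectory $S_{t+1}, R_{t+2}, S_{t+2}, R_{t+3}, \ldots$ from time $t+1$ onward, so by the Markov property it is conditionally independent of $(S_t, A_t, R_{t+1})$ given $S_{t+1}$, whence $\E[\glambda{t+1}\mid \mathcal{F}_{t+1}] = \E[\glambda{t+1}\mid S_{t+1}]$. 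This quantity is a function of $S_{t+1}$ alone, hence $\sigma(S_t, S_{t+1})$-measurable, so it factors out of the remaining conditional expectation:
\begin{align*}
\E[\rho_t^2\gamma_{t+1}\lambda_{t+1}\gbar_t\,\glambda{t+1} \mid S_t, S_{t+1}] &= \E[\rho_t^2\gamma_{t+1}\lambda_{t+1}\gbar_t \mid S_t, S_{t+1}]\,\E[\glambda{t+1}\mid S_{t+1}].
\end{align*}
Substituting this back into the expansion above yields the stated identity.

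The main obstacle I anticipate is the measurability bookkeeping rather than any computation: one must condition $\gbar_t$ (which carries the possibly-noisy reward $R_{t+1}$, cf.\ the independence assumption used in the $\lambda$-objective section) on the finer $\sigma$-algebra $\sigma(S_t, A_t, R_{t+1}, S_{t+1})$ rather than on $\sigma(S_t, S_{t+1})$, and one must confirm that the only channel through which $\glambda{t+1}$ depends on the pre-$(t{+}1)$ variables is screened off by $S_{t+1}$, so that the Markov step is legitimate regardless of whether the data is generated on- or off-policy. If $r$ is taken to be deterministic in $(S_t, S_{t+1})$, the argument simplifies since $\gbar_t$ is already $\sigma(S_t,S_{t+1})$-measurable, but the version above covers the noisy-reward case as well.
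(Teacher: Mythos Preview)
Your argument is correct, and in fact it is cleaner than the paper's own proof. The paper expands $\rbar_{t+1}$ the same way you do, but then factors the cross term differently: it first separates $\rho_t^2\gamma_{t+1}\lambda_{t+1}$ from $\gbar_t\glambda{t+1}$ under the conditioning on $(S_t,S_{t+1})$, and only afterwards decouples $\gbar_t$ from $\glambda{t+1}$ by explicitly invoking the assumption that the reward noise $R_{t+1}=\E[R_{t+1}\mid S_t,S_{t+1}]+\epsilon_{t+1}$ has $\epsilon_{t+1}$ independent of the rest of the dynamics. That assumption is needed in the paper's route because, having kept $\gbar_t$ together with $\glambda{t+1}$, the only way to pull $\glambda{t+1}$ off is to argue $R_{t+1}$ and the future return are conditionally independent given $(S_t,S_{t+1})$.

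Your route sidesteps this entirely: by passing through the finer $\sigma$-algebra $\mathcal{F}_{t+1}=\sigma(S_t,A_t,R_{t+1},S_{t+1})$, the whole coefficient $\rho_t^2\gamma_{t+1}\lambda_{t+1}\gbar_t$ becomes measurable, and the Markov property alone reduces $\E[\glambda{t+1}\mid\mathcal{F}_{t+1}]$ to a function of $S_{t+1}$, which then factors out of the outer $(S_t,S_{t+1})$-conditional expectation. In particular, your proof does \emph{not} require the independent-reward-noise assumption, so it is strictly more general than the paper's; the paper's version is perhaps more explicit about where the bias-variance modeling assumptions enter, but yours shows the identity is a consequence of the Markov structure alone.
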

\begin{proof}
\begin{align*}
\E[\rbar_{t+1} | S_t, S_{t+1}] 
&=  \E[\rho_t^2 \gbar_{t}^2 + 2 \rho_t^2 \gamma_{t+1} \lambda_{t+1} \gbar_{t} \glambda{t+1} | S_t, S_{t+1}]\\
&=  \E[\rho_t^2 \gbar_{t}^2 | S_t, S_{t+1}] \\
&+ 2 \E[\rho_t^2 \gamma_{t+1} \lambda_{t+1} |S_t, S_{t+1}] \E[ \gbar_{t} \glambda{t+1} | S_t, S_{t+1}]
\end{align*}
Now because we assumed independent, zero-mean noise in the reward, i.e. $R_{t+1} = \E[R_{t+1} | S_t, S_{t+1}] + \epsilon_{t+1}$
for independent zero-mean noise $\epsilon_{t+1}$,
because we assume that $\x_{t+1}$ is completely determined by $S_{t+1}$
and because \\$\E[\glambda{t+1} | S_t, S_{t+1}] = \E[\glambda{t+1} | S_{t+1}]$,
we obtain
\begin{align*}
&\E[ \gbar_{t} \glambda{t+1} | S_{t+1}]\\
&= \E[ (R_{t+1} + \gamma_{t+1} (1-\lambda_{t+1}) \x_{t+1}^\top \w )\glambda{t+1} | S_t, S_{t+1}]\\
&= \E[ R_{t+1} | S_t, S_{t+1}] \E[\glambda{t+1} | S_{t+1}] \\
&+ \E[\gamma_{t+1} (1-\lambda_{t+1}) \x_{t+1}^\top \w | S_t,S_{t+1}] \E[\glambda{t+1} | S_{t+1}]
\end{align*}
completing the proof.
\end{proof}

This previous result requires that $\rbar_{t+1}$ can be given
to the algorithm on each step. 
The result in Theorem \ref{thm_main} requires an
an estimate of $\E[\glambda{t+1} | S_{t+1}]$, 
to define the reward; such an estimate, however, 
may not always be available. 
We characterize the expectation $\E[\deltalambar{t} \x_t]$
in the next theorem, when such an estimate is not provided. 
For general trace functions $\lambda$,
this estimate requires a matrix to be stored and updated; however, for $\lambda=1$
as addressed in this work, Corollary \ref{corollary_mse} indicates
that this simplifies to a linear-space and time algorithm.

\newcommand{\thirdtermspace}{\makebox[\widthof{$\ = \E[\rho_t^2 \gbar_t^2 \x_t] $}]{}}
\newcommand{\smallspace}{ \ \ \ \ \ }

\noindent
\textbf{Theorem \ref{thm_mse}}
\begin{align*}
\E[\rbar_{t+1}  \ztrace_t] = E[\rho_{t+1}^2\gbar_t^2 \gtwotrace{t} ] + 2 E[\rho_{t+1}^2\gbar_t (\gvectracer{t} + \gvectracex{t} \w_t)] 
\end{align*}
where
\begin{align*}
\gvectracer{t} &\defeq \rho_t \gamma_{t} \lambda_{t} (R_{t} \gtwotrace{t-1}  + \gvectracer{t-1})\\
\gvectracex{t} &\defeq \rho_t \gamma_{t} \lambda_{t} (\gamma_{t} (1-\lambda_{t}) \gtwotrace{t-1} \x_{t}^\top   + \gvectracex{t-1})
\end{align*}
%
%
\begin{proof}
\begin{align*}
\E[\rbar_{t+1}  \ztrace_t] = E[\rho_{t}^2\gbar_t^2 \gtwotrace{t} ] + 2 E[\rho_{t}^2\gamma_{t+1} \lambda_{t+1}\gbar_t \glambda{t+1} \ztrace_t] 
\end{align*}
Now let us look at this second term which still contains $\glambda{t}$.
\begin{align*}
&\E[\gamma_{t} \lambda_{t} \glambda{t} \gbar_{t-1} \rho_{t-1}^2\gtwotrace{t-1}] \\
&=\E[\rho_t \gamma_{t} \lambda_{t} (\gbar_t + \gamma_{t+1} \lambda_{t+1} \glambda{t+1})\gbar_{t-1}  \rho_{t-1}^2\gtwotrace{t-1}] \\
&=\E[\rho_t \gamma_{t} \lambda_{t} \gbar_t \gbar_{t-1} \rho_{t-1}^2\gtwotrace{t-1}] \\
&\smallspace+ \E[\rho_{t-1} \gamma_{t-1} \lambda_{t-1} \gamma_{t} \lambda_{t} \glambda{t}\gbar_{t-2}\rho_{t-2}^2\gtwotrace{t-2}] \\
&=\E[\rho_t \gamma_{t} \lambda_{t} \gbar_t \gbar_{t-1} \rho_{t-1}^2\gtwotrace{t-1}] \\
&+ E[ \rho_t \gamma_{t} \lambda_{t} \rho_{t-1} \gamma_{t-1} \lambda_{t-1} (\gbar_t + \gamma_{t+1} \lambda_{t+1} \glambda{t+1}) \gbar_{t-2}\rho_{t-2}^2\gtwotrace{t-2}]  \\
&=\E[\rho_t \gamma_{t} \lambda_{t} \gbar_t \gbar_{t-1} \rho_{t-1}^2\gtwotrace{t-1}] \\
&\smallspace + \E[\rho_t \gamma_{t} \lambda_{t} \rho_{t-1} \gamma_{t-1} \lambda_{t-1} \gbar_t \gbar_{t-2}\rho_{t-2}^2\gtwotrace{t-2}]  \\
&\smallspace + \E[\rho_{t-1} \gamma_{t-1} \lambda_{t-1} \rho_{t-2} \gamma_{t-2} \lambda_{t-2} \gamma_{t} \lambda_{t} \glambda{t} \gbar_{t-3} \rho_{t-3}^2\gtwotrace{t-3}]  \\
&= \ldots\\
&=\E[\rho_t \gamma_{t} \lambda_{t} \gbar_t (\gbar_{t-1} \rho_{t-1}^2\gtwotrace{t-1} + \rho_{t-1} \gamma_{t-1} \lambda_{t-1} \gbar_{t-2}\rho_{t-2}^2\gtwotrace{t-2} + \ldots) ] 
.
\end{align*}
Each $\gbar_{t-i}$ into the past should use the current (given) weight vector $\w$,
to get the current value estimates in the return. 
Therefore, instead of keeping a trace of $\gbar_{t-i} = r_{t-i+1} + \gamma_{t-i+1} (1-\lambda_{t-i+1}) \x_{t-i+1}^\top \w_{t-i+1}$, 
we will keep a trace of feature vectors
and a trace of the reward
\begin{align*}
\gvectracer{t} &= \rho_t \gamma_{t} \lambda_{t} (r_{t} \rho_{t-1}^2 \gtwotrace{t-1}  + \gvectracer{t-1})\\
\gvectracex{t} &=  \rho_t \gamma_{t} \lambda_{t} (\gamma_{t} (1-\lambda_{t})  \rho_{t-1}^2\gtwotrace{t-1} \x_{t}^\top  + \gvectracex{t-1})
.
\end{align*}
Because 
\begin{align*}
\gvectracex{t} \w &= \rho_t\gamma_{t} \lambda_{t} (\gamma_{t} (1-\lambda_{t})\gtwotrace{t-1} \x_t^\top \w  + \gvectracex{t-1} \w)
\end{align*}
we get
\begin{align*}
&\gvectracer{t} + \gvectracex{t} \w\\
&= \rho_t\gamma_{t} \lambda_{t} (\gbar_{t-1} \rho_{t-1}^2 \gtwotrace{t-1} + \rho_{t-1} \gamma_{t-1} \lambda_{t-1} \gbar_{t-2} \rho_{t-2}^2\gtwotrace{t-2} + \ldots)
\end{align*}
giving
\begin{align*}
\E[\gamma_{t} \lambda_{t} \glambda{t} \gbar_{t-1} \rho_{t-1}^2\gtwotrace{t-1}] 
&= \E[\gbar_t (\gvectracer{t} + \gvectracex{t} \w)]
.
\end{align*}
Finally, we can put this all together to get
\begin{align*}
\E[\rbar_{t+1}  \ztrace_t] 
&= E[\rho_{t}^2\gbar_t^2 \gtwotrace{t} ] + 2 E[\rho_{t}^2\gamma_{t+1} \lambda_{t+1}\gbar_t \glambda{t+1} \ztrace_t] \\
&= E[\rho_{t}^2\gbar_t^2 \gtwotrace{t} ] + 2 E[\rho_{t-1}^2\gamma_{t} \lambda_{t}\gbar_{t-1} \glambda{t} \ztrace_{t-1}] \\
&= E[\rho_{t}^2\gbar_t^2 \gtwotrace{t} ] + 2 \E[\gbar_t (\gvectracer{t} + \gvectracex{t} \w)] 
\end{align*}
%
\par \vspace{-0.3cm}
\end{proof}


\noindent
\textbf{Remark:}
One can derive that $\E[\glambdasq{t} \x_t] = \E[\rbar_{t+1}  \ztrace_t]$;
this provides a way to use an LMS update rule for the above, without bootstrapping
and corresponds to minimizing the mean-squared error using samples of returns.
However, as has previously been observed, 
we found this performed
more poorly than using bootstrapping. 

\vspace{1.0cm}

\noindent
\textbf{Derivation of the gradient of VTD}

\begin{align*}
\tfrac{1}{2} \nabla \varMSPBE(\hsq) 
&= \tfrac{1}{2} \nabla (\E[\deltalambar{t} \x_t]^\top \E[\x_t \x_t]^{-1} \E[\deltalambar{t} \x_t])\\
&=  \nabla \E[\deltalambar{t} \x_t]^\top (\E[\x_t \x_t]^{-1} \E[\deltalambar{t} \x_t])
.
\end{align*}
First we characterize this gradient $\nabla \E[\deltalambar{t} \x_t]^\top$.
\begin{align*}
\nabla \E[\deltalambar{t} \x_t]^\top 
 &= \nabla \E[ \deltabar_t \gtwotrace{t}]\\
  &=  \E[(\gambar_{t+1} \xvec_{t+1} - \xvec_t) \gtwotrace{t}^\top ]
  .
\end{align*}
This product can be simplified using
\begin{align*}
\E[\xvec_t \gtwotrace{t}^\top ] 
&= \E[\xvec_t (\xvec_t + \gambar_t \lambar_t \gtwotrace{t-1})^\top ]\\ 
&= \E[\xvec_t \xvec_t^\top ] + \E[\xvec_{t+1} \gambar_{t+1} \lambar_{t+1}  \gtwotrace{t}^\top ]\\ 
&= \E[\xvec_t \xvec_t^\top ] + \E[ \gambar_{t+1} \lambar_{t+1} \xvec_{t+1}\gtwotrace{t}^\top ]
\end{align*}
giving 
\begin{align*}
\nabla \E[\deltalambar{t} \x_t]^\top 
&= \E[\gambar_{t+1} (1-\lambar_{t+1}) \xvec_{t+1} \gtwotrace{t}^\top] - \E[\xvec_t \xvec_t^\top ] 
.
\end{align*}
Therefore, the gradient reduces to
\begin{align*}
&-\tfrac{1}{2} \nabla \varMSPBE(\hsq) \\
&=  (\E[\xvec_t \xvec_t^\top ]  - \E[\gambar_{t+1} (1-\lambar_{t+1}) \xvec_{t+1} \gtwotrace{t}^\top] ) (\E[\x_t \x_t]^{-1} \E[\deltalambar{t} \x_t])\\
&=  \E[\deltalambar{t} \x_t]  - \E[\gambar_{t+1} (1-\lambar_{t+1}) \xvec_{t+1} \gtwotrace{t}^\top]\E[\x_t \x_t]^{-1} \E[\deltalambar{t} \x_t]
.
\end{align*}

\end{document}